  \providecommand\BibTeX{{%
    \normalfont B\kern-0.5em{\scshape i\kern-0.25em b}\kern-0.8em\TeX}}}
\begin{document}

\title{TAP: A Comprehensive Data Repository for Traffic Accident Prediction in Road Networks}


\author{Baixiang Huang}
\email{bhuang15@hawk.iit.edu}
\affiliation{
  \institution{Illinois Institute of Technology}
  \streetaddress{10 W 35th Street}
  \city{Chicago}
  \state{Illinois}
  \country{USA}
  \postcode{60616}
}

\author{Bryan Hooi}
\email{bhooi@comp.nus.edu.sg}
\affiliation{
  \institution{National University of Singapore}
  \streetaddress{21 Lower Kent Ridge Road}
  \country{Singapore}
  \postcode{119077}
}

\author{Kai Shu}
\email{kshu@iit.edu}
\affiliation{
  \institution{Illinois Institute of Technology}
  \streetaddress{10 W 35th Street}
  \city{Chicago}
  \state{Illinois}
  \country{USA}
  \postcode{60616}
}

\renewcommand{\shortauthors}{Baixiang Huang, Bryan Hooi, and Kai Shu}

\begin{abstract}
Road safety is a major global public health concern. Effective traffic crash prediction can play a critical role in reducing road traffic accidents. However, Existing machine learning approaches tend to focus on predicting traffic accidents in isolation, without considering the potential relationships between different accident locations within road networks. To incorporate graph structure information, graph-based approaches such as Graph Neural Networks (GNNs) can be naturally applied. However, applying GNNs to the accident prediction problem faces challenges due to the lack of suitable graph-structured traffic accident datasets. To bridge this gap, we have constructed a real-world graph-based Traffic Accident Prediction (TAP) data repository, along with two representative tasks: accident occurrence prediction and accident severity prediction. With nationwide coverage, real-world network topology, and rich geospatial features, this data repository can be used for a variety of traffic-related tasks. We further comprehensively evaluate eleven state-of-the-art GNN variants and two non-graph-based machine learning methods using the created datasets. Significantly facilitated by the proposed data, we develop a novel Traffic Accident Vulnerability Estimation via Linkage (\textsc{TRAVEL}) model, which is designed to capture angular and directional information from road networks. We demonstrate that the proposed model consistently outperforms the baselines. The data and code are available on GitHub \footnote{https://github.com/baixianghuang/travel}.
\end{abstract}

\begin{CCSXML}
<ccs2012>
<concept>
<concept_id>10002951.10003227.10003236.10003237</concept_id>
<concept_desc>Information systems~Geographic information systems</concept_desc>
<concept_significance>500</concept_significance>
</concept>
<concept>
<concept_id>10002951.10003227.10003236.10003101</concept_id>
<concept_desc>Information systems~Location based services</concept_desc>
<concept_significance>300</concept_significance>
</concept>
</ccs2012>
\end{CCSXML}

\ccsdesc[500]{Information systems~Geographic information systems}
\ccsdesc[300]{Information systems~Location based services}

\keywords{Data repository, traffic accident prediction, road safety, graph neural networks, intelligent transportation system}

\maketitle

\section{Introduction}
Road traffic accidents \footnote{In this paper, we use the terms "accident" and "crash" interchangeably following convention. The Oxford English Dictionary defines an accident as an event that happens unexpectedly or without apparent cause. However, many roadway “accidents” are the direct result of some environmental factors or unsafe driving behaviors.} are the leading cause of death for people aged five to twenty-nine years globally \cite{world2018global}. The U.S. traffic fatality rate has experienced an alarming 19 percent surge from 2019 to 2021, marking the highest number of road deaths in the U.S. Fatality Analysis Reporting System's history since 2005 \cite{trip2022trafficfatality,nhtsa2022fatareport}. Therefore, understanding and mitigating traffic crashes is an imminent task.

\begin{figure}[t]
    \centering
    \includegraphics[width=0.45\textwidth]{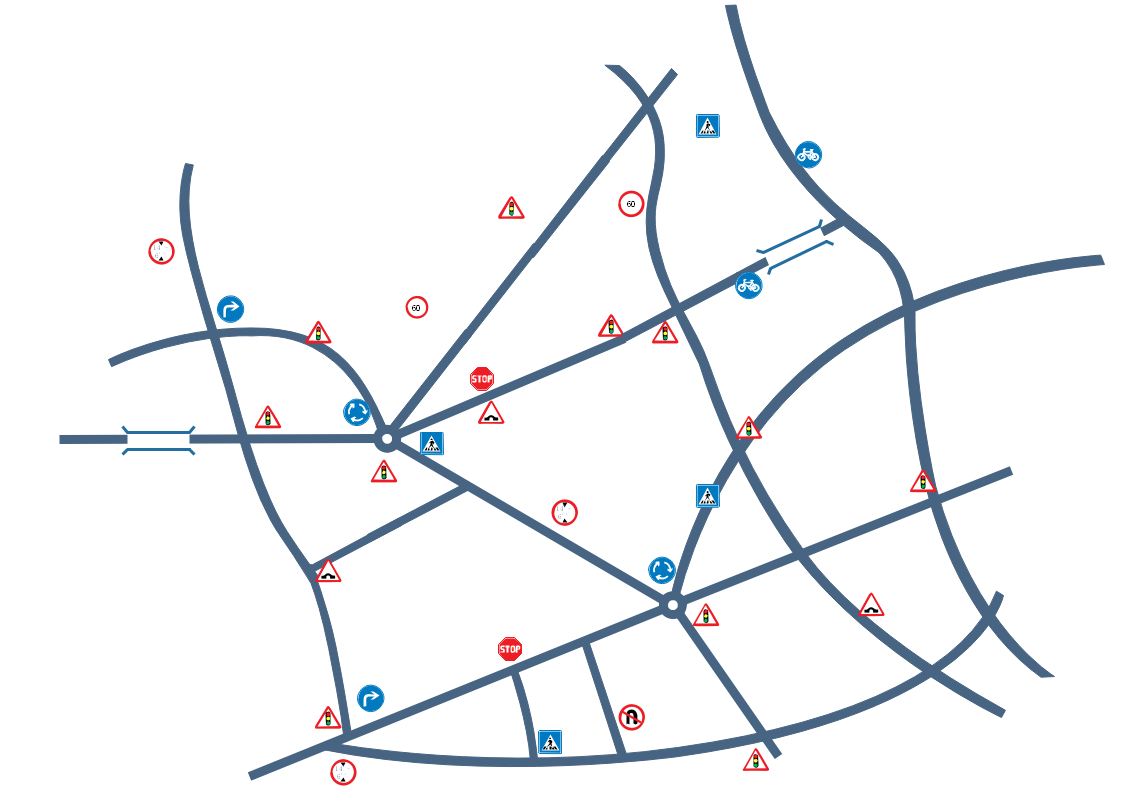}
    \caption{An illustration of a road network with various environmental features (e.g., traffic signs, speed limits, and road types).}
    \label{fig:intersection-demo1}
    \Description{A road network with different environmental features.}
\end{figure}

Roadway accidents are preventable and the direct result of environmental factors or unsafe driving behaviors. In this work, we focus on the environmental risk factors of road traffic accidents: in particular, can we predict how risk-prone a traffic intersection is based only on street map data, such as the road geometry, nearby highways, and landmarks? Traffic crash prediction at a fine-grained spatial scale (e.g., a traffic intersection) can help governments to mitigate traffic risks, such as informing the design of future road networks and planning accident response systems with the awareness of risk-prone accident hotspots. According to the U.S. Federal Highway Administration's Proven Safety Countermeasures initiative \cite{web23countermeasure}, road design plays a critical role among countermeasures and strategies in reducing roadway deaths and serious injuries.

As shown in Figure \ref{fig:intersection-demo1}, a city road network has a variety of environmental features that could potentially be correlated with accidents - such as road types and road lengths. The network has roundabouts, bridges, one-way roads, residential roads, tertiary roads, and motorways. Roads also have other attributes, such as stop signs and speed limits. Different road types and conditions may pose different risks for crashes. For instance, a high-volume freeway is likely to have more roadway crashes than a less traveled two-lane residential road because it has more traffic and a higher speed limit. Moreover, road characteristics such as turning radius and direction also affect road safety. In general, sharper road curves are more dangerous \cite{othman2009identifying}. Hence, our goal is to design an algorithm that learns how geospatial information should be used to predict the risk of accidents in each road intersection. 

The majority of existing machine learning approaches for predicting traffic accidents uses grid-based approaches. Intuitively, the road network contains rich information across different locations that have the potential to enhance traffic accident prediction, e.g., the shape or angle of roads leading to an intersection. In addition, the angular and direction information in the network structures can also be important. Therefore, graph-based approaches allow surrounding information to be taken into account by aggregating information from neighbors multiple hops away, compared to existing grid-based approaches, which only use information from within each grid cell. Moreover, graph-based approaches provide the flexibility to take advantage of geospatial data sources such as OpenStreetMap (OSM) \cite{web23osm}. Maps from OSM are structured as graphs, where nodes represent intersections and dead-end nodes, and edges represent roads. Graph-based methods enable better prediction of risky intersections with map data.


To learn from graph data, Graph Neural Networks (GNNs) are becoming increasingly popular \cite{hamilton2017representation}. Despite the potential benefits of using GNNs for traffic accident prediction, several challenges still need to be addressed. Many approaches fail to accurately reflect real-world road structures due to the lack of benchmark datasets with such information \cite{li2017speedrandomwalkgru,zhou2020spatiotemporalaccident}. The majority of existing accident datasets are not organized with a proper graph structure, making it infeasible to apply GNNs. Existing traffic accident datasets \cite{web23nyc,web22nys,web23chicago} have limited coverage and geospatial features. To facilitate the development of graph-based machine learning methods, we construct and release a Traffic Accident Prediction (TAP) data repository, with two prediction tasks (accident occurrence prediction and accident severity prediction). 

To build the data repository, We collect raw accident records, street geospatial data, and graph structure information. The collected accident coordinates are reversely geocoded. Next, the crash data are integrated with graph structure and geospatial features, which are then reorganized and preprocessed. In the proposed TAP repository, datasets are organized into city-level (TAP-city) and state-level (TAP-state), covering over 1,000 U.S. cities and 49 states. This repository allows users to study traffic-related problems without significant preprocessing effort. We will continuously update and expand this dataset as we collect more crash data and other auxiliary features.

We further develop a novel architecture called Traffic Accident Vulnerability Estimation via Linkage (\textsc{TRAVEL}). Different from existing GNNs, where each node aggregates from its neighbors using simple functions, ours aggregates in a way that captures both the angles and directions of roads adjacent to a node. \textsc{TRAVEL}’s graph convolution layers consist of two components: modeling the angles between roads and measuring the direction information simultaneously. We demonstrate superior performance by leveraging the TAP repository. The key contributions of this work are as follows:
\begin{itemize}
    \item We formulate the graph-based traffic accident prediction as a node prediction problem, which aims to predict accident occurrences or accident severities over a road graph. 
    
    \item We construct and release a Traffic Accident Prediction (TAP) data repository that significantly simplifies the application of graph-based machine learning methods for traffic crash prediction and analysis. The proposed datasets are graph-based and contain real-world geospatial features.
    
    \item We propose a new GNN architecture, \textsc{TRAVEL}, which can capture angular and directional information from road networks. We comprehensively evaluate our model against MLP, XGBoost, and eleven state-of-the-art GNN baselines. We validate that our proposed model consistently has the best performance on the benchmark datasets.
\end{itemize}

\section{Related Work}
\subsection{Traffic Accident Prediction}
Roadway crash prediction is often formulated as a classification or regression problem. Early work proposed Poisson, Negative Binomial, and Negative Multinomial regression models to predict the number of accidents over a discretized grid \cite{oh2006regressionaccident,caliendo2007regressionaccident}. \citet{najjar2017imgaccident} train traditional Convolutional Neural Networks (CNNs) on satellite images of traffic accidents to produce a traffic risk map. These studies convert road networks to regular 2-D grids because traditional convolutional operations handle spatial correlations over such grids. In contrast, our approach not only takes full advantage of graph structures of road networks but also incorporates real-world environmental features. 

The traffic accident prediction problem has also been formulated as a spatiotemporal forecasting task. Early studies apply k-nearest neighbors \cite{lv2009knn}, k-means clustering, and logistic regression \cite{park2016kmeans}. \citet{chen2016realtimeaccident} use human mobility features obtained from stacked denoising autoencoders to infer traffic risk. \citet{yu2017lstm} combine Long Short-Term Memory (LSTM) and stacked autoencoders for post-accident condition prediction. \citet{ren2018lstm} propose a model based on Recurrent Neural Networks (RNNs) to capture spatial and temporal patterns of traffic accident frequency. \citet{zhou2020riskoracle} use deep learning for spatiotemporal accident prediction over a rectangular grid. With the emergence of graph-based deep learning approaches, some work applies GNNs to this task \cite{yu2021spatiotemporal,zhou2020spatiotemporalaccident}. However, The majority of existing GNN models for accident prediction use generated graphs instead of the real-world road networks. For example, \citet{li2017speedrandomwalkgru} construct a directed weighted graph using thresholded Gaussian kernel weighting function \cite{shuman2013edgeweight}. \citet{zhou2020spatiotemporalaccident} formulate graphs by dividing the study area into grids and adding edges between grids with strong correlations.

\subsection{Traffic Spatiotemporal Forecasting}
In addition to accident prediction, some research focuses on other spatiotemporal traffic prediction problems such as traffic speed or volume forecasting. Classic data-driven traffic forecasting methods include the Auto-Regressive Integrated Moving Average (ARIMA) model and Kalman filtering \cite{liu2011arima,lippi2013kalman}. Deep learning models for these tasks typically use RNNs and CNNs to capture temporal and spatial correlations, respectively \cite{yu2017lstm,zhang2017resnettraffic}. \citet{yao2019stdn} proposed a spatial-temporal dynamic network that uses a periodically shifted attention mechanism.

Graph-based approaches, such as bidirectional random walks \cite{li2017diffusion} and graph convolution operators \cite{yu2017gcntraffic,lv2018gcntrafficspeed,cui2019gcntraffic,chen2020bicomponentgcn}, are also used to capture spatial dependencies in traffic forecasting. \citet{zheng2020gman} further proposed a graph multi-attention network that adapts an encoder-decoder architecture and attention mechanisms.

Our goal is to develop a method that can be applied directly to readily available street map data from OpenStreetMap, which would make it much easier for users to apply our method to new cities. Geospatial data for new cities is often already available, while collecting temporal or other additional data would require specialized data collection efforts. Although temporal tasks are not discussed in this paper, we included accident timestamps in our datasets to support future work that considers temporal information.

\section{Preliminaries}
\subsection{Problem Statement}
We model a road network as a weighted directed graph $G = (V, E)$, where vertices in $V$ represent endpoints (intersections and dead-end nodes) in the road network, while edges in $E$ represent roads. The graph also has a set of node features $\mathbf{x}_v \in \mathbb{R}^{d_v}$, where $d_v$ is the number of node features, and edge attributes $\mathbf{e}_{uv} \in \mathbb{R}^{d_e}$ for $(u,v) \in E$, where $d_e$ is the number of edge attributes. 

In the `accident occurrence' prediction task, we aim to output a binary prediction indicating whether a vertex has traffic accidents or not. In the `accident severity' prediction task, we bucketize the average severity of accidents at each node, thereby grouping the nodes into a finite number of classes. Then, we aim to predict the class of each node. Therefore, both of these tasks are formulated as node classification problems. We focus on accidents around intersections because an analysis of the US-Accident dataset shows that the majority of the accidents took place near intersections \cite{moosavi2019usaccident}. Each node $v$ has its label denoted as $y_v$.

\subsection{Basic GNN Framework}
A GNN model uses node features $\mathbf{x}_v$ to learn the representation of a node. In each layer, each node aggregates representations of its neighbors and updates the representation of itself. The $k$-th layer of a basic GNN is:

\begin{equation}
    \mathbf{h}_v^{(k)} = \mathsf{Update}^{(k)} (\mathbf{h}_v^{(k-1)}, \mathbf{m}_{\mathcal{N}(v)}^{(k)})
\end{equation}

\begin{equation}
    \mathbf{m}_{\mathcal{N}(v)}^{(k)} = \mathsf{Aggr}^{(k)} (\{\mathbf{h}_u^{(k-1)}, \forall u \in \mathcal{N}(v)\})
\end{equation}

where $\mathbf{h}_v^{(k)}$ is the node embedding of node $v$ in the $k$-th layer. Initial $0$-th layer embeddings $\mathbf{h}_v^{(0)}$ are equal to node features $\mathbf{x}_v$. $\mathbf{m}_{\mathcal{N}(v)}$ denotes the aggregated message from node $v$'s neighborhood $\mathcal{N}(v) = \{u: (u,v) \in E\}$. $\mathsf{Update}$ is a neural network that updates the representation of $v$, and $\mathsf{Aggr}$ is a function that aggregates representations of $\mathcal{N}(v)$.

There are some GNN variants that support message passing with multi-dimensional edge features. For example, in MPNN \cite{gilmer2017mpnn}, each layer is defined as:
\begin{equation} \label{eq_mpnn}
    \mathbf{h}_v^{(k)} = \sigma (\mathbf{W}^{(k)} \mathbf{h}_v^{(k-1)} + \sum_{u \in \mathcal{N}(v)} \mathbf{h}_u^{(k-1)} \cdot \mathsf{Net}(\mathbf{e}_{vu}))
\end{equation}
where $\sigma$ is a nonlinear function, $\mathbf{W}^{(k)}$ represents a trainable weight matrix, and $\mathsf{Net}$ denotes a neural network that matches the dimensionality of the edge features $\mathbf{e}_{vu}$ to that of the neighbor embedding $\mathbf{h}_u^{(k-1)}$.

\section{Data Repository Construction} \label{section-dataset}

\begin{table}
\small
\centering
\caption{Statistics of TAP-city (1,000 city-level datasets).}
\label{table:data-stat-city}
\begin{tabular}{lcccc} 
\toprule
\textbf{Measure}      & \textbf{\# nodes} & \textbf{\# edges}  & \begin{tabular}[c]{@{}c@{}}\textbf{Avg node}\\\textbf{degree}\end{tabular} & \begin{tabular}[c]{@{}c@{}}\textbf{\% accident}\\\textbf{nodes}\end{tabular}  \\ 
\midrule
mean  & 3,051.84 & 7,782.13  & 2.56 & 8.84  \\
std   & 5,658.60 & 14,469.31 & 0.19 & 7.18  \\
min   & 50       & 95        & 1.46 & 0.45  \\
0.25  & 628.00   & 1,615.75  & 2.43 & 4.47  \\
0.5   & 1,308.50 & 3,372.50  & 2.54 & 6.75  \\
0.75  & 2,948.75 & 7,523.00  & 2.69 & 10.64 \\
max   & 59,711   & 148,937   & 3.21 & 54.84 \\
\bottomrule
\end{tabular}
\end{table}

\begin{table}
\small
\centering
\caption{Statistics of TAP-state (49 state-level datasets).}
\label{table:data-stat-state}
\begin{tabular}{lcccc} 
\toprule
\textbf{Measure}      & \textbf{\# nodes} & \textbf{\# edges}  & \begin{tabular}[c]{@{}c@{}}\textbf{Avg node}\\\textbf{degree}\end{tabular} & \begin{tabular}[c]{@{}c@{}}\textbf{\% accident}\\\textbf{nodes}\end{tabular}  \\ 
\midrule
mean  & 346,537.14 & 876,531.86 & 2.54 & 3.45   \\
std   & 291,425.12 & 735,376.82 & 0.14 & 3.66   \\
min   & 9,941      & 26,769     & 2.33 & 0.16   \\
0.25  & 142,558    & 350,503    & 2.44 & 1.41   \\
0.5   & 308,919    & 770,361    & 2.52 & 1.94   \\
0.75  & 448,635    & 1,054,027  & 2.58 & 4.57   \\
max   & 1,608,908  & 4,099,325  & 2.99 & 21.14  \\   
\bottomrule
\end{tabular}
\end{table}

\begin{table*}
\small
\centering
\caption{Comparison with existing traffic prediction datasets.}
\label{table:data-compare}
\begin{tabular}{llcclll} 
\toprule
\textbf{Dataset} & \textbf{Type} & \textbf{Geospatial} & \textbf{Graph-based} & \textbf{Coverage} & \textbf{Time} \\ 
\midrule
PeMS \cite{chen2001datasetpems} & Flow and speed & & & State-wide & 2001 \\
METR-LA \cite{jagadish2014datasetmetrla} & Flow and speed & & & County-wide & 2014 \\
Chicago Traffic Crashes \cite{web23chicago}  & Accident       &            &            & City-wide (0.70M rows) & 2023 \\
NY City Motor Vehicle Collisions \cite{web23nyc} & Accident       &            &            & City-wide (1.98M rows) & 2023 \\
NY State Motor Vehicle Crashes \cite{web22nys} & Accident       &            &          & State-wide (3.51M rows) & 2022 \\
UK Traffic Accidents \cite{web16ukaccident} & Accident       &            &             & Nation-wide (1.60M rows) & 2016 \\
TAP  & Accident  & \checkmark  & \checkmark  & Nation-wide (16.98M nodes and 42.95M edges) & 2023 \\
\bottomrule
\end{tabular}
\end{table*}

A major obstacle that explains why it is difficult to apply GNNs to traffic accident prediction is the lack of graph-based datasets. We describe how we construct the Traffic Accident Benchmark (TAP) repository in this section. The city-level and state-level datasets are denoted as TAP-city and TAP-state, respectively. Numerical measures of the data repository can be found in Table \ref{table:data-stat-city} and Table \ref{table:data-stat-state}. Table \ref{table:dataset-stat} shows the statistics of 6 sample datasets. We provide a convenient and user-friendly interface. Initializing our datasets will automatically download the preprocessed files, the result of which can be easily plugged into existing GNNs. 

We provide a list of existing datasets and compare them with our repository in Table \ref{table:data-compare}. In general, popular traffic accident datasets include Chicago Traffic Crashes \cite{web23chicago}, New York City Motor Vehicle Collisions \cite{web23nyc}, New York State Motor Vehicle Crashes \cite{web22nys}, and UK Traffic Accidents \cite{web16ukaccident}. The TAP data repository is graph-based, contains rich geospatial features, and has comprehensive geographical coverage. Additionally, there are other traffic flow or speed datasets such as PeMS, which is collected by California Transportation Agencies and provides real-time traffic speed records in California \cite{chen2001datasetpems}. Another dataset is METR-LA, a traffic dataset that contains traffic information collected from loop detectors in the highways of Los Angeles County \cite{jagadish2014datasetmetrla}. Traffic accident data are typically collected from police reports \cite{lloyd2016britainreport,web23nyc}, which are often made publicly available. However, traffic flow and speed data are less available as traffic monitoring devices are not widely available or completely prohibited by laws \cite{zhu2019availability,web2023lawbystate}.

\begin{table}
\small
\centering
\caption{Statistics of six sample datasets.}
\label{table:dataset-stat}
\begin{tabular}{lcccc} 
\toprule
\textbf{City}      & \textbf{\# nodes} & \textbf{\# edges}  & \begin{tabular}[c]{@{}c@{}}\textbf{Avg node}\\\textbf{degree}\end{tabular} & \begin{tabular}[c]{@{}c@{}}\textbf{\% accident}\\\textbf{nodes}\end{tabular}  \\ 
\midrule
Houston        & 59,711 & 148,937 & 2.49 & 22.10 \\
New York       & 55,404 & 140,005 & 2.53 & 8.25  \\
Los Angeles    & 49,251 & 135,547 & 2.75 & 13.01 \\
Dallas         & 36,150 & 92,348  & 2.55 & 25.79 \\
Miami          & 8,461  & 22,648  & 2.68 & 13.31 \\
Orlando        & 7,513  & 18,216  & 2.42 & 30.17 \\
\bottomrule
\end{tabular}
\end{table}

\begin{figure}[t]
    \centering
    \includegraphics[width=0.45\textwidth]{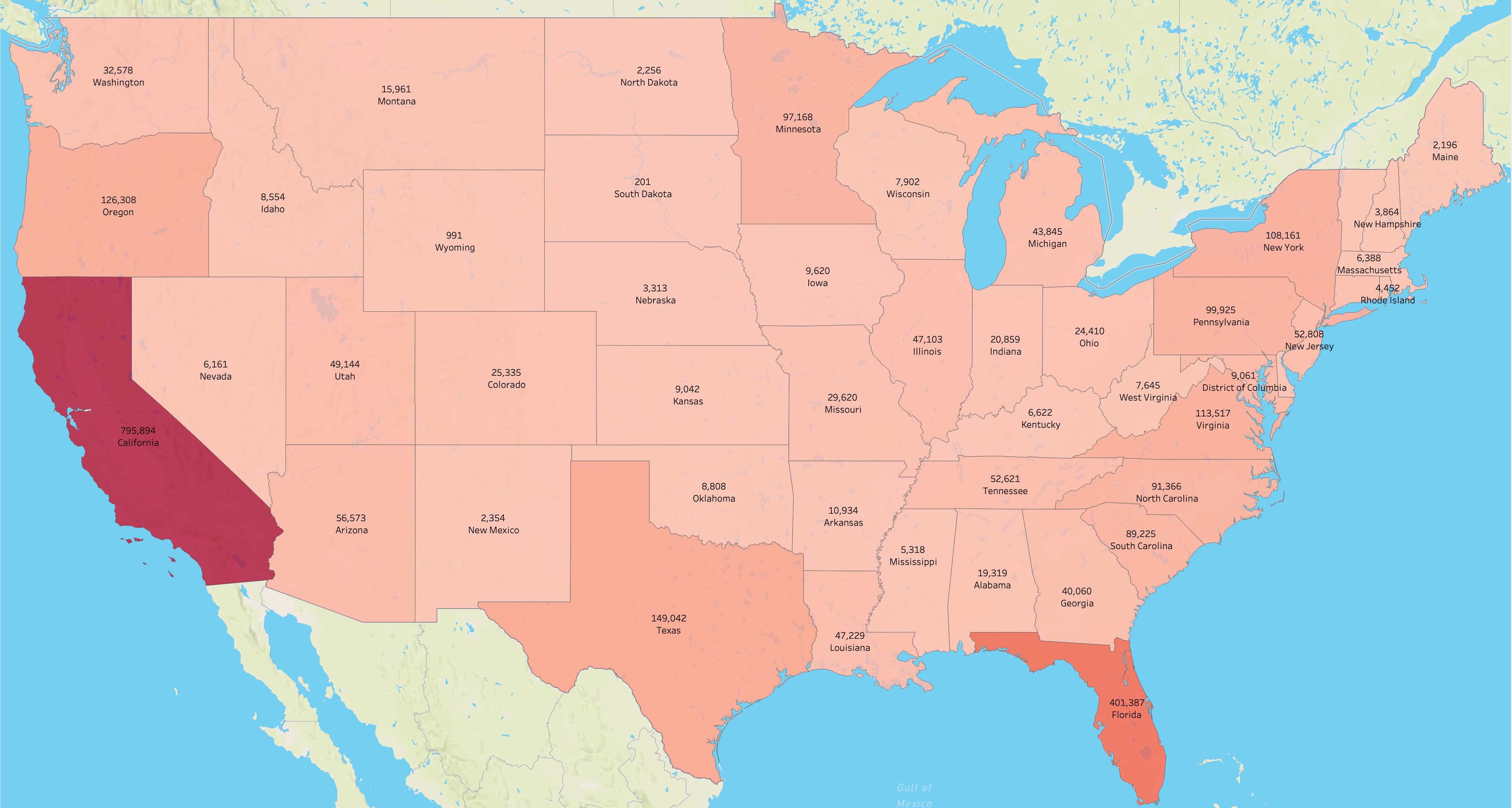}
    \caption{Total number of accidents by states between 2016 and 2021.}
    \label{fig:accidents-cnt-state}
\end{figure}


\subsection{Data Collection}
The raw accident events come from Microsoft Bing Map Traffic \cite{webbingmap}, which are collected in the US-Accidents dataset \cite{web22usaccident}. It contains about 2.8 million instances of traffic accident data between January 2016 and December 2021. Roughly 32\% of accidents occurred on or near local roads (e.g., streets, avenues, and boulevards), and about 40\% took place on or near high-speed roads (e.g., highways, interstates, and state roads) \cite{moosavi2019usaccident}. We use the data across the entire five-year time period. Figure \ref{fig:accidents-cnt-state} illustrates the total number of accidents by states over five years. The top 10 states in the U.S. with the highest total crash count are California, Florida, Texas, Oregon, Virginia, New York, Pennsylvania, Minnesota, North Carolina, and South Carolina. 

We use OpenStreetMap as our data source for geospatial data, a collaborative initiative that provides a freely available geospatial database. The OSM data contain rich environmental features such as road type, road length, bridge type, and the number of lanes. These features are also keys for OSM tags, which describe the specific attributes of map elements (nodes, ways, or relations). We use the OSMnx \cite{boeing2017osmnx} package to collect raw geospatial data from OSM. OSM includes walkable, drivable, and bikeable urban road data. Since most accidents occur on the drivable networks, we only use drivable public road data (private-access or service roads not included). In a road network, nodes are points such as intersections and dead-ends, and edges represent roads.

\subsection{Data Preprocessing}
We first build road networks using structural and feature information from OSM. The edge and node features, and their descriptions, are listed in Table \ref{table:graph-features}. We also extract directional and angular features denoted as 'edge\_attr\_dir' and 'edge\_attr\_ang'. Unlike other graph structures, road networks have these two unique geometric features, which will be discussed in detail in section \ref{section-travel}. 

After collecting the accident records, we run reverse geocoding to find the corresponding addresses of accident coordinates using Nominatim \cite{web23nominatim}. Next, the geocoded data are split according to settlement hierarchy: there are 49 states, 1585 cities, 1644 counties, 3584 towns, and 4227 villages. Since city-level graphs have a medium size, the TAP-city consists of one thousand city-level datasets \footnote{A list of cities sorted by their total counts of traffic accident occurrences can be found in our GitHub repository}. Additionally, we also provided state-level datasets (TAP-state) for 49 U.S. states. The state-level datasets provide a larger graph scale. In general, large and popular cities have more traffic accidents, and their road networks have more nodes and edges. Since the spatial distribution of traffic crashes is sparse and imbalanced, there are limited positive samples, especially for small cities. Therefore, We only select the top 1,000 cities as the positive samples become too sparse. 

Next, missing values are replaced with a new category, and feature data are encoded using one-hot encoding. Then the coordinate data of accident locations are used to find the nearest corresponding nodes in the road networks based on the haversine distance. For the accident occurrence prediction task, binary labels are added to each node indicating whether it contains at least one accident. For the severity prediction task, average accident severities are bucketized into eight classes (using an interval size of 0.5) to be used as labels. The severity feature uses an integer from 0 to 7 to specify the importance of the accident from low to serious impact where 0 denotes no accident and 7 denotes serious impact. Finally, data are split using a stratified split: 60\% of the data is used for training, 20\% is used for validation, and the remaining 20\% is used for testing.

The proposed data repository is well-documented and easily accessible. It can be downloaded with a few lines of code. We are committed to regularly updating the dataset with the latest traffic crash records and auxiliary features. Furthermore, users can effortlessly incorporate additional attributes into the existing data. Our datasets' design enables the straightforward integration of other information, such as weather, human mobility, and points of interest (POIs), thereby facilitating more comprehensive analyses.

\begin{table}
\small
\centering
\caption{Edge features (top) and node features (bottom) included in our datasets.}
\label{table:graph-features}
\begin{tabular}{ll} 
\toprule
\textbf{Graph features} & \textbf{Description}                           \\ 
\midrule
highway                & The type of a road (tertiary, motorway, etc.).  \\
length                 & The length of a road.                           \\
bridge                 & Indicates whether a road represents a bridge.   \\
lanes                  & The number of lanes of a road.                  \\
oneway                 & Indicates whether a road is a one-way street.   \\
maxspeed               & The maximum legal speed limit of a road.        \\
access                 & Describes restrictions on the use of a road.    \\
tunnel                 & Indicates whether a road runs in a tunnel.      \\
junction               & Describes the junction type of a road.          \\ 
edge\_attr\_dir        & Directional information about a road.           \\
edge\_attr\_ang        & Angular information of a road.                  \\
\midrule
highway                & The road type of a node.                        \\
street\_count          & The number of roads connected to a node.        \\
\bottomrule
\end{tabular}
\end{table}

\section{TRAVEL Framework} \label{section-travel}
\subsection{Overview and Motivation} 
Road geometry-related characteristics such as turning radius and direction (i.e., left versus right turns) have long been recognized as important factors affecting road safety \cite{othman2009identifying}. Motivated by this, we design a GNN approach that is effective at capturing information from both road geometry, as well as allowing the use of rich node and edge features already available in OSM.

How do we design a GNN architecture that effectively incorporates road geometry? We find that an effective way to do this is to augment the message passing process in GNNs with additional \emph{angular} and \emph{directional} information. The angular component allows our model to better capture relevant information about an intersection (e.g., whether it has a right or left turn, sharp turns, etc.). The directional component allows the model to capture the direction of a road: for instance, whether it is heading north-to-south versus east-to-west, which can be relevant in practice.

We describe \textsc{TRAVEL} as a layer taking in the previous node embeddings $\mathbf{h}_v$ (suppressing the layer number since we only describe a single \textsc{TRAVEL} layer). 

\begin{figure*}[t]
    \centering
    \includegraphics[width=0.99\textwidth]{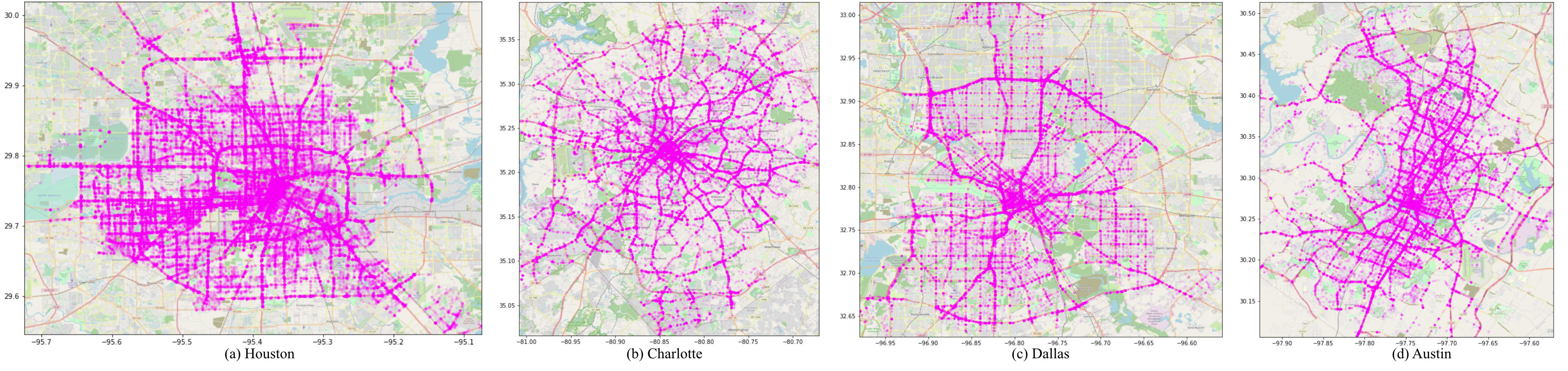}
    \caption{Traffic accident locations of Houston, Charlotte, Dallas, and Austin.}
    \label{fig:four-cities-accidents}
    \Description{Accident locations on the maps of Houston, Charlotte, Dallas, and Austin.}
\end{figure*}

\begin{figure*}[t]
    \centering
    \includegraphics[width=0.99\textwidth]{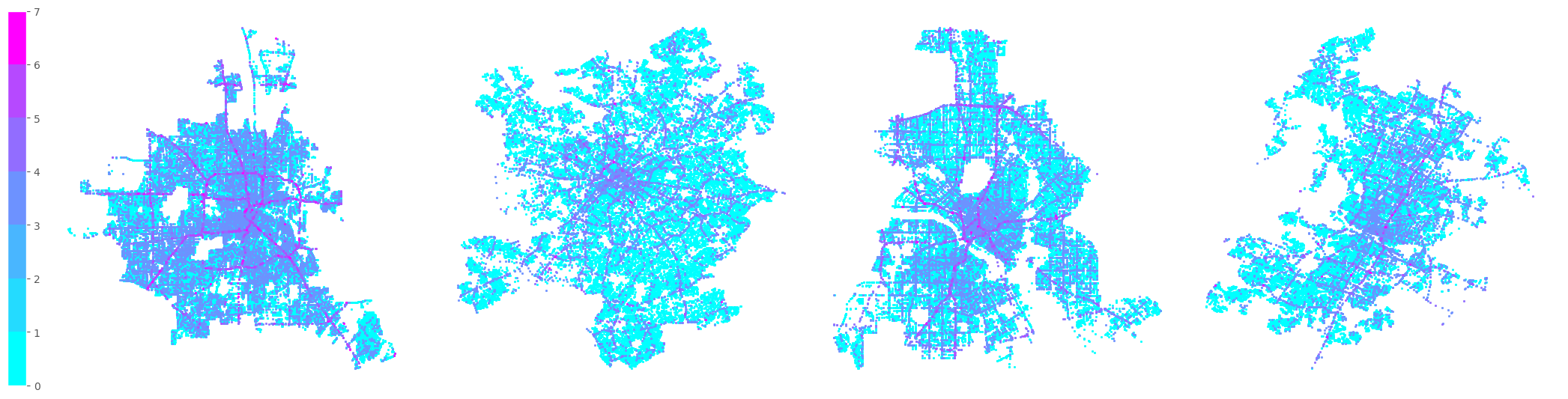}
    \caption{Traffic accident severities of Houston, Charlotte, Dallas, and Austin.}
    \label{fig:four-cities-severities}
    \Description{Visualizations of accident severities on the maps of Houston, Charlotte, Dallas, and Austin.}
\end{figure*}

\subsection{Leveraging Angular information}
The angular component augments the message passing process from node $u$ to node $v$ with information about the angles between the road $(u, v)$ and all the other roads intersecting at node $v$. This allows our GNN model to take into consideration the angles between roads, which are of key importance to road geometry throughout the message passing process.

An important aspect of the design of our angular component is that it is \emph{rotationally symmetric}. This means that the output computed by this component at a node $v$ does not change even if we rotate the graph by any rotation centered at $v$. This makes sense intuitively: the road geometry of a particular intersection does not change if we rotate all roads about this intersection by any fixed angle. This rotational symmetry is important as it ensures that this component has \emph{inductive biases}\footnote{Inductive biases describe the assumptions that a model uses to produce output on unseen data.} that are well suited to its intended task of capturing road geometry.

The angular component is defined as: given points $u$, $v$, $w$, let $\angle(\overrightarrow{uv}, \overrightarrow{wv})$ denote the directed angle\footnote{This is the signed angle that $\overrightarrow{uv}$ must be rotated to have the same direction as $\overrightarrow{wv}$.} from $\overrightarrow{uv}$ to $\overrightarrow{wv}$. Recall that our angular component is designed to augment the messages passed from $u$ to $v$ with information about the angles between road $(u,v)$ and other roads intersecting at $v$. 

Formally, the \emph{set of (directed) angles} between road $(u,v)$ and each of the other roads at $v$ is: 
\begin{align}
    \Phi_{uv} := \{ \angle(\overrightarrow{uv}, \overrightarrow{wv}): w \in \mathcal{N}_v \setminus \{ u \} \}
\end{align}

Next, we will aggregate over $\Phi_{uv}$ to extract suitably summarized information from this set. Rather than using standard aggregation functions, we use a designed aggregation function that aims to particularly emphasize the presence of informative features: namely 1) sharp left turns, 2) sharp right turns, and 3) nearly straight roads. We do this by first defining $\Phi_{uv}^\pi$ as the set $\{|\pi - \phi|: \phi \in \Phi_{uv}\}$, then aggregating as follows (where $\parallel$ denotes concatenation):
\begin{align}
    \mathbf{a}_{uv} := \min(\Phi_{uv}) \parallel \max(\Phi_{uv}) \parallel \min(\Phi^\pi_{uv})
\end{align}

The first two components correspond to the sharpest angles of left and right turns to edge $(u,v)$. The third component corresponds to the angle of $(u,v)$ to the road which is closest to a straight road along with $(u,v)$. Thus, the aggregated angular information $\mathbf{a}_{uv}$ provides a concise summary of the useful information contained in angles between $(u,v)$ and other roads at $v$. 

Finally, our angular component incorporates this angular information $\mathbf{a}_{uv}$ when passing a message along $(u,v)$. The angular component takes in node representations $\mathbf{h}_v$, and outputs the angular node representations $\mathbf{h}_v^{\textrm{Angle}}$:
\begin{align} 
    \mathbf{h}_v^{\textrm{Angle}} &= \mathsf{ReLU}(\mathbf{W} \mathbf{h}_v + \mathbf{m}_{\mathcal{N}(v)}^{\textrm{Angle}}) \label{eq_travel} \\
    \mathbf{m}_{\mathcal{N}(v)}^{\textrm{Angle}} &= \sum_{u \in \mathcal{N}(v)} \mathsf{MLP}(\mathbf{h}_u \parallel \mathbf{e}_{uv} \parallel \mathbf{a}_{uv} ) \label{eq_travel_msg}
\end{align}

An important property of the angular component is that it is rotationally symmetric. We show this as follows.

\begin{theorem}[Rotational Symmetry]
The angular component is rotationally symmetric.
\end{theorem}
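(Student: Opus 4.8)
The plan is to trace exactly which inputs to the angular component depend on the spatial embedding of the graph, and then show that the single geometry-dependent input, namely the aggregated angular feature $\mathbf{a}_{uv}$, is left unchanged by any rotation about $v$. Concretely, I would fix a node $v$ and let $R$ denote an arbitrary rotation of the plane centered at $v$. Inspecting equations (\ref{eq_travel}) and (\ref{eq_travel_msg}), the output $\mathbf{h}_v^{\textrm{Angle}}$ is obtained by applying the fixed linear map $\mathbf{W}$, the $\mathsf{MLP}$, $\mathsf{ReLU}$, and summation to the node embeddings $\mathbf{h}_v, \mathbf{h}_u$, the edge features $\mathbf{e}_{uv}$, and $\mathbf{a}_{uv}$. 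The embeddings and edge features are intrinsic attributes (road type, length, number of lanes, etc.) that carry no dependence on the absolute orientation of the road network, so they are invariant under $R$. It therefore suffices to prove that $\mathbf{a}_{uv}$ is invariant under $R$; the conclusion then follows because identical inputs to a fixed function produce identical outputs.

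The crux is a short lemma: directed angles between two vectors emanating from a common point are preserved by any rotation about that point. First I would observe that a rotation $R$ about $v$ sends $u \mapsto v + R_0(u-v)$, where $R_0 \in SO(2)$ is the associated linear rotation, so that the displacement vectors transform as $\overrightarrow{uv} \mapsto R_0\,\overrightarrow{uv}$ and $\overrightarrow{wv} \mapsto R_0\,\overrightarrow{wv}$ (the point $v$ being fixed). Writing the directed angle as a difference of polar arguments, $\angle(\overrightarrow{uv}, \overrightarrow{wv}) = \arg(\overrightarrow{wv}) - \arg(\overrightarrow{uv}) \pmod{2\pi}$, each argument is shifted by the same rotation angle under $R_0$, and the two shifts cancel, giving $\angle(R_0\overrightarrow{uv}, R_0\overrightarrow{wv}) = \angle(\overrightarrow{uv}, \overrightarrow{wv})$.

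With the lemma in hand, I would conclude as follows. Since every element of $\Phi_{uv} = \{\angle(\overrightarrow{uv}, \overrightarrow{wv}) : w \in \mathcal{N}_v \setminus \{u\}\}$ is individually fixed by $R$, the entire set $\Phi_{uv}$ is unchanged; consequently its deterministic transform $\Phi^\pi_{uv} = \{|\pi - \phi| : \phi \in \Phi_{uv}\}$ is unchanged, and hence $\min(\Phi_{uv})$, $\max(\Phi_{uv})$, and $\min(\Phi^\pi_{uv})$ are all unchanged. Thus $\mathbf{a}_{uv}$ is invariant for every neighbor $u$, so $\mathbf{m}_{\mathcal{N}(v)}^{\textrm{Angle}}$ and hence $\mathbf{h}_v^{\textrm{Angle}}$ are invariant under $R$, which is precisely rotational symmetry.

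I expect the argument to be essentially routine once the angle-invariance lemma is isolated; the only point requiring genuine care is the modular nature of the directed angle, where I must verify that the equal shift of both arguments cancels \emph{exactly}, rather than merely up to the ambiguity of the branch cut used in reducing $\pmod{2\pi}$. A secondary, more conceptual obstacle is simply making precise the implicit modeling assumption that the node and edge features entering (\ref{eq_travel})--(\ref{eq_travel_msg}) are themselves rotation-invariant, so that $\mathbf{a}_{uv}$ is indeed the only orientation-dependent quantity in the layer; stating this assumption explicitly is what makes the invariance claim well posed.
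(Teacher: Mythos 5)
Your proof is correct and takes essentially the same route as the paper's: a rotation centered at $v$ fixes each directed angle $\angle(\overrightarrow{uv}, \overrightarrow{wv})$, hence $\mathbf{a}_{uv}$, and since the node and edge features are rotation-invariant, $\mathbf{h}_v^{\textrm{Angle}}$ is unchanged. You simply make explicit what the paper leaves implicit, namely the argument-cancellation computation in $SO(2)$ (including the $\bmod\ 2\pi$ bookkeeping) and the modeling assumption that $\mathbf{h}_u$ and $\mathbf{e}_{uv}$ carry no orientation dependence.
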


\begin{proof}
When rotating all points about $v$ by a fixed rotation, each of the directed angles $\angle(\overrightarrow{uv}, \overrightarrow{wv})$ remains unchanged as $u$ and $w$ rotate by the same angle around $v$. This implies that all the $\mathbf{a}_{uv}$ also remain unchanged. Since the edge features $\mathbf{e}_{uv}$ are also unchanged by the rotation, thus $\mathbf{h}_v^{\textrm{Angle}}$ is also unchanged.
\end{proof}

\subsection{Assessing Directional Attribute}
As we have seen, the angular component is designed to be rotationally symmetric for the purpose of modeling road geometry. However, the directional information ignored by the angular component can still be useful in some contexts: e.g., in some cities, north-south roads may have different characteristics from east-west roads. This motivates our directional component, which captures the direction that each road is heading in. 

Let $\textsc{lat}_u$ and $\textsc{lon}_u$ denote the latitude and longitude of node $u$, respectively. This allows us to compute the \textbf{direction} of the edge $(u,v)$ as:
\begin{equation} \label{eq_ang_theta}
    \mathbf{d}_{uv} = (\textsc{lat}_v - \textsc{lat}_u,\textsc{lon}_v - \textsc{lon}_u)
\end{equation}

Like in the angular component, we incorporate directions into the message passing process:
\begin{align} 
    \mathbf{h}_v^{\textrm{Dir}} &= \mathsf{ReLU}(\mathbf{W} \mathbf{h}_v + \mathbf{m}_{\mathcal{N}(v)}^{\textrm{Dir}}) \label{eq_travel_dir} \\
    \mathbf{m}_{\mathcal{N}(v)}^{\textrm{Dir}} &= \sum_{u \in \mathcal{N}(v)} \mathsf{MLP}(\mathbf{h}_u \parallel \mathbf{e}_{uv} \parallel \mathbf{d}_{uv} ) \label{eq_travel_msg_dir}
\end{align}

Finally, the combined \textsc{TRAVEL} layer's output is the concatenation between the output of the angular and directional components, i.e., $\mathbf{h}_v^{\textrm{Angle}} \parallel \mathbf{h}_v^{\textrm{Dir}}$. This \textsc{TRAVEL} layer can be straightforwardly trained using standard loss functions (cross-entropy loss in our setting) or plugged into any existing GNN.

\section{Experiments}

\begin{table*}
\small
\centering
\caption{Accident occurrence prediction results in terms of F1 score(\%) and AUC(\%).}
\label{table:exp-result}
\begin{tabular}{lcccccccccccc} 
\toprule
            & \multicolumn{2}{c}{Miami}     & \multicolumn{2}{c}{Los Angeles}   & \multicolumn{2}{c}{Orlando}      & \multicolumn{2}{c}{Dallas}      & \multicolumn{2}{c}{Houston}      & \multicolumn{2}{c}{New York}      \\
\cmidrule(lr){2-3}\cmidrule(lr){4-5}\cmidrule(lr){6-7}\cmidrule(lr){8-9}\cmidrule(lr){10-11}\cmidrule(lr){12-13}
\textbf{Classifier}                         & F1             & AUC            & F1             & AUC            & F1             & AUC            & F1             & AUC            & F1             & AUC            & F1             & AUC                         \\ 
\midrule

XGBoost & 11.8±1.6 & 53.1±0.4 & 16.5±0.4 & 54.5±0.1 & 39.4±1.1 & 61.4±0.3 & 31.0±2.2 & 58.5±0.8 & 16.1±0.6 & 53.8±0.2 & 23.8±0.8 & 56.8±0.3 \\
MLP         & 13.0±0.8          & 61.3±1.9          & 16.0±0.5          & 66.3±0.1          & 38.8±1.9          & 65.6±2.0          & 32.5±1.1          & 67.8±0.4          & 15.9±0.7          & 64.0±0.4          & 23.7±0.9          & 65.6±1.2          \\
GCN         & 20.0±3.3          & 68.5±3.3          & 40.2±1.1          & 80.4±0.3          & 51.6±0.8          & 73.1±1.2          & 39.8±1.9          & 73.1±0.4          & 16.4±1.3          & 66.7±0.2          & 39.2±3.7          & 75.5±0.4          \\
ChebNet     & 20.7±2.9          & 71.3±3.6          & 39.8±1.8          & 81.0±0.3          & 53.1±0.6          & 76.7±1.6          & 42.0±0.5          & 75.8±0.4          & 23.8±0.5          & 69.6±0.5          & 40.9±4.3          & 78.3±1.1          \\
ARMANet     & 19.2±3.3          & 69.5±3.5          & 40.8±1.0          & 80.9±0.4          & 51.5±1.3          & 75.7±1.4          & 41.2±0.5          & 75.6±0.2          & 23.1±0.4          & 69.2±0.7          & 42.4±1.1          & 77.7±0.6          \\
GraphSAGE   & 20.7±2.4          & 67.6±2.8          & 41.6±0.5          & 80.5±0.3          & 52.6±1.3          & 74.1±1.2          & 44.2±0.5          & 74.4±0.3          & 23.7±0.4          & 68.5±0.4          & 42.5±1.1          & 76.3±0.1          \\
TAGCN       & 25.2±1.1          & 73.5±2.4          & 49.5±0.7          & 84.7±0.2          & 53.3±2.5          & 77.2±1.2          & 45.4±0.4          & 77.0±0.5          & 23.7±0.6          & 70.5±0.3          & 42.0±1.1          & 81.5±0.2          \\
GIN         & 22.8±1.2          & 72.7±2.6          & 41.6±0.7          & 81.8±0.2          & 54.7±1.4          & 76.6±1.1          & 41.3±2.0          & 75.2±0.3          & 20.9±1.0          & 68.0±0.3          & 41.7±2.1          & 79.1±0.5          \\
GAT         & 22.6±1.5          & 68.3±3.0          & 41.6±0.4          & 80.9±0.2          & 55.3±1.3          & 74.1±1.0          & 42.1±1.5          & 73.6±0.3          & 17.8±0.8          & 67.3±0.3          & 42.2±0.5          & 76.6±0.4          \\
MPNN        & 38.8±2.1          & 82.4±1.0          & 46.0±1.6          & 83.9±0.2          & 61.4±2.5          & 81.8±0.7          & 48.5±1.9          & 79.4±0.4          & 28.2±1.7          & 73.5±0.5          & 44.9±0.8          & 86.9±0.4          \\
CGC         & 34.4±2.7          & 79.5±1.5          & 45.0±1.2          & 81.5±0.2          & 59.0±2.1          & 81.1±0.8          & 48.5±0.5          & 79.2±0.7          & 27.3±1.9          & 72.3±0.1          & 40.6±1.2          & 85.4±0.8          \\
Transformer & 37.7±3.3          & 81.0±1.9          & 48.9±0.3          & 83.8±0.3          & 62.9±1.6          & 82.0±0.7          & 49.8±0.7          & 80.0±0.7          & 28.4±0.7          & 73.9±0.4          & 43.1±0.7          & 87.2±0.4          \\
GEN         & 44.9±3.1          & 81.0±2.4          & 48.6±6.2          & 82.7±0.9          & 63.0±1.1          & 81.2±0.9          & 56.5±1.7          & 79.5±0.1          & 34.1±6.0          & 73.7±0.4          & 47.3±1.4          & 87.7±0.9          \\
{TRAVEL}      & {51.9±1.0} & {84.9±0.9} & {55.3±0.9} & {85.9±0.5} & {65.0±0.4} & {82.3±0.4} & {58.0±0.9} & {80.8±0.7} & {46.4±0.7} & {74.5±0.3} & {51.1±0.9} & {88.2±0.2} \\

\bottomrule
\end{tabular}
\end{table*}

\begin{figure*}
    \centering
    \includegraphics[width=0.95\textwidth]{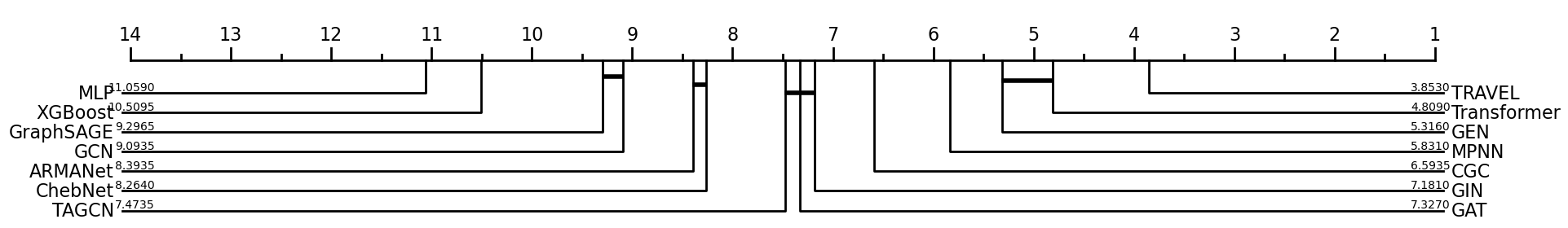}
    \caption{Critical difference diagram of the traffic accident occurrence prediction task. TRAVEL performs statistically significantly better than all other methods.}
    \label{fig:cd-diagram-occurrence}
    \Description{A critical difference diagram showing the pairwise statistical difference comparison of all models on one thousand different datasets for the accident occurrence prediction task.}
\end{figure*}

In this section, we comprehensively evaluate the performance of \textsc{TRAVEL} and thirteen other baselines, finding that \textsc{TRAVEL} clearly outperforms them on the traffic accident occurrence and severity prediction tasks. Figure \ref{fig:four-cities-accidents} and \ref{fig:four-cities-severities} plot the labels to be predicted for the two tasks respectively.

From Table \ref{table:data-stat-city} and Table \ref{table:data-stat-state}, we observe that the traffic accident data are imbalanced: only a small proportion of the nodes have accidents. Therefore, the models are evaluated based on F1 score, Area Under the Receiver Operating Characteristic Curve (AUC), and Accuracy.

All neural network models are implemented using PyTorch \cite{paszke2019pytorch} version 1.12.1 and PyTorch Geometric \cite{fey2019pytorchgeo} version 2.2.0. The models are trained using the cross-entropy loss and Adam optimizer. We train models for 300 epochs and use 16 as their hidden dimensions. The experiments are run on a machine with AMD(R) EPYC 7352 and NVIDIA(R) RTX A5000 graphics card (CUDA 11.6). 

All GNN models have two graph convolutional layers, where layer-2 embeddings get information from nodes two hops away. We also add a fully connected layer after GNN layers to increase their expressive power. Moreover, L2 penalty and dropout layers with 0.5 dropout rates are added to all neural network models to reduce overfitting.

\subsection{Baselines}
We compare our \textsc{TRAVEL} model with two generic machine learning models and eleven state-of-the-art GNN models: (1) \textbf{XGBoost}: A scalable gradient boosting tree-based approach \cite{chen2016xgboost}. (2) \textbf{MLP}: A multi-layer perceptron is a classic feedforward artificial neural network. (3) \textbf{GCN}: Graph Convolutional Networks, which generalize CNNs from low-dimensional regular grids to graph data using neighborhood-based filters \cite{kipf2016gcnconv}. (4) \textbf{ChebNet}: Chebyshev spectral graph convolution networks, which are spectral graph convolutional architectures with fast localized spectral filtering \cite{defferrard2016chebnet}. (5) \textbf{ARMANet}: Graph neural networks with convolutional auto-regressive moving average (ARMA) filters \cite{bianchi2021armanet}. (6) \textbf{GraphSAGE}: A general inductive framework for inductive representation learning on graphs \cite{hamilton2017inductive}. (7) \textbf{TAGCN}: Topology adaptive graph convolutional networks, which use fixed-size learnable filters to perform convolutions on graphs \cite{du2018tagcn}. (8) \textbf{GIN}: Graph Isomorphism Networks, which generalize the Weisfeiler-Lehman (WL) graph isomorphism test \cite{xu2018gnn,weisfeiler1968reduction}. (9) \textbf{GAT}: Graph attention networks, which apply attentional mechanisms during aggregation \cite{velivckovic2017gat}. (10) \textbf{MPNN}: Message Passing Neural Network, which is a general GNN framework designed for computational chemistry, reasoning, and simulation. \cite{gilmer2017mpnn}. (11) \textbf{CGC}: Crystal graph convolutional neural network is an accurate and interpretable framework that can extract the contributions from local features to global properties \cite{xie2018cgc}. (12) \textbf{GEN}: GENeralized graph convolutional neural networks, which support softmax, power, and mean aggregation \cite{li2020gen}. (13) \textbf{Transformer}: Graph transformers, which adopt vanilla multi-head attention into graph learning with taking into account the case of edge features \cite{shi2021graphtransformer}.

\begin{table*}
\small
\centering
\caption{State-level accident occurrence prediction results in terms of F1 score(\%), and AUC(\%).}
\label{table:exp-result-state}
\begin{tabular}{lcccccccccccc} 
\toprule
            & \multicolumn{2}{c}{California}     & \multicolumn{2}{c}{Oregon}   & \multicolumn{2}{c}{Utah}      & \multicolumn{2}{c}{Maryland}      & \multicolumn{2}{c}{Minnesota}      & \multicolumn{2}{c}{Connecticut}      \\
\cmidrule(lr){2-3}\cmidrule(lr){4-5}\cmidrule(lr){6-7}\cmidrule(lr){8-9}\cmidrule(lr){10-11}\cmidrule(lr){12-13}
\textbf{Classifier}                         & F1             & AUC            & F1             & AUC            & F1             & AUC            & F1             & AUC            & F1             & AUC            & F1             & AUC                         \\ 
\midrule
XGBoost     & 10.4±0.3      & 52.7±0.1       & 10.5±0.6  & 52.8±0.2   & 25.3±0.5 & 57.7±0.2 & 17.8±0.4    & 54.9±0.1     & 14.4±1.1     & 53.9±0.3      & 26.2±1.7       & 57.7±0.6        \\
MLP         & 10.2±0.2      & 64.8±0.1       & 9.0±0.3   & 60.8±0.3   & 26.7±0.2 & 63.9±0.5 & 17.8±0.2    & 69.5±0.1     & 14.4±1.1     & 63.8±0.4      & 26.1±1.9       & 65.3±0.9        \\
GCN         & 24.0±0.0      & 71.5±0.0       & 20.6±0.5  & 68.7±0.7   & 32.7±0.1 & 76.3±0.3 & 26.1±1.3    & 79.5±0.4     & 28.1±0.4     & 70.9±0.3      & 40.0±0.6       & 76.2±0.8        \\
ChebNet     & 23.2±1.0      & 72.9±0.2       & 21.0±0.2  & 73.1±0.3   & 34.3±1.2 & 77.3±0.5 & 28.5±0.3    & 80.4±0.1     & 30.2±2.1     & 74.1±1.3      & 42.0±0.4       & 76.6±0.2        \\
ARMANet     & 23.6±2.0      & 72.8±0.2       & 18.6±3.4  & 72.7±0.7   & 34.6±0.3 & 77.2±0.3 & 28.6±1.6    & 80.6±0.2     & 26.4±1.7     & 72.7±1.2      & 42.4±1.5       & 77.2±0.6        \\
GraphSAGE   & 25.8±0.4      & 72.8±0.4       & 21.4±0.9  & 71.2±1.3   & 34.3±1.5 & 77.7±0.5 & 28.5±1.2    & 80.2±0.1     & 28.9±0.1     & 71.9±0.8      & 42.7±1.6       & 77.2±0.6        \\
TAGCN       & 28.7±0.4      & 75.9±0.1       & 24.7±1.0  & 76.6±0.1   & 34.1±1.0 & 78.9±0.4 & 26.1±0.9    & 81.8±0.4     & 30.8±0.9     & 77.2±0.8      & 37.8±0.8       & 78.2±1.3        \\
GIN         & 28.0±0.2      & 72.7±0.2       & 24.3±0.4  & 74.2±0.2   & 36.2±0.3 & 78.9±0.5 & 28.2±0.6    & 80.8±0.2     & 32.0±1.8     & 74.9±1.3      & 42.0±1.2       & 77.2±0.4        \\
GAT         & 24.5±0.3      & 72.2±0.1       & 21.8±0.4  & 70.2±1.0   & 35.6±2.2 & 76.8±0.6 & 27.8±0.2    & 80.4±0.2     & 29.1±0.2     & 71.3±1.1      & 41.9±2.6       & 75.9±1.5        \\
MPNN        & 33.2±1.7      & 79.9±0.5       & 56.2±1.3  & 89.2±0.8   & 43.1±2.6 & 88.7±0.3 & 32.3±0.2    & 89.4±0.2     & 38.5±1.0     & 89.8±0.1      & 43.9±0.6       & 85.8±0.7        \\
CGC         & 34.4±2.8      & 79.0±0.1       & 53.1±3.3  & 88.5±0.1   & 47.1±0.8 & 88.3±0.2 & 43.0±1.9    & 88.6±0.4     & 48.1±1.7     & 88.7±0.5      & 43.1±0.2       & 85.9±0.7        \\
Transformer & 29.3±0.5      & 80.1±0.1       & 53.5±0.4  & 89.4±0.3   & 44.8±1.4 & 89.0±0.1 & 31.5±0.6    & 89.2±0.6     & 33.9±0.5     & 90.1±0.2      & 43.9±0.7       & 85.1±0.6        \\
GEN         & 43.9±0.1      & 77.3±0.4       & 59.4±0.8  & 87.2±0.7   & 53.8±0.8 & 86.1±0.3 & 40.8±2.8    & 88.4±0.8     & 52.3±1.5     & 88.7±0.4      & 43.1±1.1       & 81.8±0.7        \\
TRAVEL      & 46.1±0.7      & 81.1±1.0       & 60.8±0.0  & 90.5±0.2   & 55.6±0.6 & 89.9±1.9 & 46.4±1.4    & 90.1±0.3     & 55.2±1.0     & 91.3±0.7      & 46.6±0.3       & 85.2±0.6        \\

\bottomrule
\end{tabular}
\end{table*}

Among GNN baselines, GCN, ChebNet, ARMANet, GraphSAGE, TAGCN, and GIN do not support message passing with multi-dimensional edge features. In contrast, GAT, MPNN, CGC, GEN, and Graph Transformer support message passing with multi-dimensional edge features.

\begin{table*}
\centering
\small
\caption{Accident severity prediction results in terms of weighted F1 score(\%).}  
\label{table:exp-result-severity}
\begin{tabular}{lcccccccccc} 
\toprule
\textbf{Classifier}  & Miami  & Los Angeles  & Orlando  & Dallas  & Houston  & New York  & Charlotte  & San Diego  & Nashville  & Sacramento  \\ 
\midrule

XGBoost     & 81.3±0.1          & 82.4±0.0          & 66.9±1.4          & 70.6±0.1          & 71.0±0.1          & 88.8±0.1          & 78.4±0.4          & 79.9±0.2          & 75.7±0.3          & 83.9±0.5          \\
MLP         & 81.3±0.1          & 80.9±0.0          & 66.7±1.2          & 70.3±0.2          & 68.3±0.0          & 88.1±0.6          & 78.1±0.2          & 79.4±0.2          & 75.6±0.4          & 83.7±0.5          \\
GCN         & 81.1±0.3          & 84.7±0.2          & 68.6±0.3          & 67.3±0.2          & 69.3±0.1          & 88.9±1.1          & 80.4±0.4          & 82.5±0.3          & 78.8±0.2          & 85.5±0.4          \\
ChebNet     & 81.4±0.2          & 84.9±0.1          & 68.5±0.9          & 72.4±0.4          & 69.4±0.9          & 89.8±0.2          & 81.0±0.6          & 82.1±0.1          & 79.0±0.5          & 85.7±0.6          \\
ARMANet     & 81.4±0.5          & 84.9±0.1          & 68.4±1.0          & 72.2±0.3          & 69.4±0.1          & 89.8±0.1          & 81.1±0.6          & 82.2±0.3          & 79.1±0.4          & 85.7±0.7          \\
GraphSAGE   & 81.4±0.2          & 84.7±0.2          & 67.9±0.8          & 72.1±0.2          & 70.2±0.4          & 89.8±0.1          & 81.0±0.7          & 82.6±0.2          & 79.2±0.5          & 86.1±0.6          \\
TAGCN       & 81.3±0.4          & 86.2±0.2          & 68.2±1.1          & 72.6±0.1          & 69.1±0.2          & 88.9±0.2          & 81.1±0.7          & 83.9±0.4          & 79.2±0.3          & 86.4±0.4          \\
GIN         & 81.3±0.4          & 85.4±0.2          & 67.5±1.1          & 71.6±0.3          & 69.1±0.2          & 89.4±0.3          & 80.8±0.6          & 82.9±0.2          & 79.2±0.3          & 86.2±0.4          \\
GAT         & 81.5±0.6          & 85.3±0.2          & 70.3±0.5          & 71.6±0.4          & 68.4±0.3          & 88.5±0.8          & 81.8±0.6          & 82.2±0.1          & 79.5±0.2          & 85.9±0.6          \\
MPNN        & 82.2±0.5          & 85.3±0.4          & 72.1±1.2          & 73.8±0.3          & 70.8±0.4          & 89.9±0.1          & 84.5±0.4          & 84.3±0.6          & 82.7±0.3          & 86.3±0.6          \\
CGC         & 82.6±0.6          & 85.8±0.1          & 72.0±0.8          & 74.2±0.1          & 71.2±0.3          & 89.8±0.3          & 83.6±0.4          & 84.3±0.5          & 82.9±0.4          & 85.8±0.5          \\
Transformer & 83.4±0.2          & 85.8±0.1          & 73.0±1.2          & 73.9±0.2          & 71.3±0.3          & 89.9±0.2          & 84.8±0.0          & 84.3±0.4          & 82.9±0.3          & 86.5±0.8          \\
GEN         & 83.6±0.6          & 85.3±0.3          & 72.8±1.3          & 74.8±0.1          & 69.7±0.7          & 89.4±0.1          & 84.5±0.3          & 83.7±0.3          & 82.7±0.3          & 86.8±0.3          \\
TRAVEL      & {84.7±1.1} & {87.2±0.3} & {73.8±0.9} & {75.7±0.2} & {74.5±0.6} & {90.5±0.2} & {85.2±0.1} & {85.4±0.5} & {83.5±0.3} & {87.2±0.2} \\     

\bottomrule
\end{tabular}
\end{table*}

\subsection{Traffic Accident Occurrence Prediction}
In the accident occurrence prediction task, which is formulated as a node classification problem, we aim to predict whether a node has traffic accidents or not based on previous accident records. Table \ref{table:exp-result} shows the prediction results on the six major U.S. cities. We run every experiment three times and report the average score along with the standard deviation in the format of "average score ± standard deviation". We generally observe that:
(1) The proposed \textsc{TRAVEL} consistently achieves the best performance on all the metrics, due to its ability to capture angular and directional features on top of other environmental features.
(2) GNN-based approaches generally outperform XGBoost and MLP. This is because nodes in GNNs can aggregate feature information from their neighbors, while the MLP and XGBoost can only learn from local feature data. 
(3) GNN variants that support multi-dimensional edge features generally outperform models that do not support them. 

To evaluate performance across the full set of one thousand city-level datasets, in Figure \ref{fig:cd-diagram-occurrence}, we use the Wilcoxon-Holm critical difference diagram\cite{IsmailFawaz2018cddiagram}. It can be interpreted as follows: methods are arranged by their average rank, so the rightmost method (\textsc{TRAVEL}) is the one with the overall best performance in terms of F1 score, i.e., lowest average rank. Meanwhile, the thick horizontal lines group a set of classifiers that are not significantly different in performance from one another. Thus, since \textsc{TRAVEL} is not connected by any horizontal lines, this indicates that it performs significantly better than all the other methods.

\subsection{Traffic Accident Severity Prediction}
The goal of this task is to predict the future accident severities of accidents. Since a node may have multiple accidents with different severity, we compute each node's mean severity and bucketize this mean severity into eight classes (severity is represented by a number between 0 and 7, where 0 denotes no accident, 1 indicates the most negligible impact on traffic, and 7 indicates a significant impact on traffic.). Figure \ref{fig:four-cities-severities} suggests that accidents near hectic main roads tend to have higher severity. As shown in Table \ref{table:exp-result-severity}, our \textsc{TRAVEL} model again clearly outperforms the baselines across all datasets in terms of weighted F1 score.

\section{Potential Applications}

\subsection{Challenges}
Data availability, integration, sparsity, and imbalanced distribution pose challenges to using traffic data effectively. Firstly, traffic data are often controlled by governments and private companies for privacy and security reasons. In fact, as of May 2018, only 17 cities in the US have open web portals of transportation data for public access \cite{zhu2019availability}. Moreover, as of April 2023, 24 states in the US are equipped with red light cameras and only 20 states have speed cameras in place. Traffic monitoring devices are completely prohibited by state laws in 9 states \cite{web2023lawbystate}. Secondly, the available open data often use different formats and features, making data integration difficult. Thirdly, data sparsity and imbalanced distribution can lead to bias, as traffic crashes are unevenly distributed across locations and time periods. The proposed TAP repository, with its extensive coverage, real-world road structure data, and multi-dimensional geospatial features, has the potential to benefit the research community in a variety of applications, such as traffic crash prediction, road safety analysis, and traffic crash mitigation.

\subsection{Traffic Crash Prediction}
The lack of comprehensive and reliable datasets is a major challenge in developing effective models for roadway crash prediction. The existing datasets are often incomplete, limited in scope, and outdated. In contrast, the TAP data repository contains up-to-date crash data with nationwide coverage. It also contains geospatial features on top of real-world graph structure data. Covering over 1,000 major U.S. cities and 49 states, the proposed city-level and state-level datasets are ideal for a variety of city-level and state-level traffic crash prediction tasks. Moreover, it contains the angular and directional features that we extract from road networks. By leveraging such data, it is possible to develop accurate models for roadway crash prediction, which in turn can facilitate the development and implementation of effective measures to mitigate traffic accidents.

We have discussed accident occurrence prediction and severity prediction. Other potential tasks include crash risk prediction and frequency prediction \cite{chen2016realtimeaccident,yuan2018convlstmaccident}. In addition, included accident timestamps information makes it suitable for spatial-temporal crash prediction tasks. Furthermore, the underlying graph structures not only facilitate the application of graph-based machine learning methods but also enable the modeling and simulation of transportation systems following real-world road topology. This work mainly discusses static geospatial features. One of the open problems is incorporating dynamic traffic flow data to crash prediction. Traffic flow data also suffers from availability and sparsity issues.

\subsection{Road Safety Analysis}
In addition to its potential for traffic accident prediction, our datasets are valuable for road safety analysis, particularly with respect to environmental causes. Traffic systems are complex, involving a variety of entities such as vehicles, motorists, pedestrians, bicyclists, road infrastructure, and the environment. Research has shown that police and the general public possess a common understanding of the top causes of traffic crashes, including human-related risk factors such as driver distraction, drug and alcohol impairment, drowsy driving, and speeding \cite{tefft2016crashsleep,rolison2018factors}. By contrast, environmental factors such as turning radius and direction may be less obvious, and rigorous analyses are needed to identify underlying patterns. Such crash analyses can be useful in developing education campaigns and directing funds towards reducing traffic fatalities.

Designing roadway environments can play a key role in mitigating human errors and accounting for injury tolerances \cite{web23countermeasure}. Indeed, the environment surrounding the roadway system, including factors such as land use and the intersections of highways, roads, and streets with other transportation modes (e.g., rail and transit), strongly influences the safety risks faced by the traveling public. Thus, it is essential to carefully consider roadway design in order to ensure that roadways are used safely and effectively.

\subsection{Traffic Crash Mitigation}
As one of the most pressing global health issues, governments, organizations, and individuals have been trying to alleviate traffic crash risk for many years. The insights derived from traffic crash data are therefore of significant importance for policymakers in making evidence-based decisions. Crash prediction and analysis can improve road safety in numerous ways. Governments will gain a clear understanding of accident hotspots, while the general public who uses map services will know when and where to avoid potential accidents. Furthermore, travel time prediction and route planning are highly linked to crash prediction since the occurrence of a traffic crash can disrupt nearby traffic flow.

Several measures taken by governments have been demonstrated to be effective \cite{web23countermeasure}, including improvements at curves, corridor access management, reduced left-turn conflict intersections, and intersections with a circular configuration (roundabout). Roadway design has a substantial influence on the traveling public. However, given these countermeasures, another challenge is how to make the most of existing infrastructure since upgrading infrastructure or altering existing road topology can be cost-prohibitive. Thus, a more practical approach is needed to design a redundant system to avoid human errors that could lead to fatalities.

\section{Conclusion}
In this paper, we affirm the benefits of GNNs for the critically important task of roadway crash prediction. We first formulate the accident occurrence prediction and accident severity prediction tasks as graph-based node classification problems. To stimulate future research in this field, we construct and release a comprehensive graph-based Traffic Accident Prediction (TAP) data repository with these two benchmark tasks. We also evaluate thirteen state-of-the-art machine learning approaches using the proposed data. The proposed datasets feature real-world graph structures and geospatial data, making them a reliable, comprehensive, and user-friendly resource for traffic crash analysis. Furthermore, we propose our \textsc{TRAVEL} framework, designed to capture angular and directional attributes, and prove its theoretical property (rotational symmetry of angular component). The experiments show that \textsc{TRAVEL} consistently outperforms the baselines. Practically, \textsc{TRAVEL} requires only features available in OpenStreetMaps as input, so it can be readily applied to almost all cities. For future work, we plan to investigate model explainability.



\bibliographystyle{ACM-Reference-Format}
\bibliography{biblio}


\begin{thebibliography}{63}


\ifx \showCODEN    \undefined \def \showCODEN     #1{\unskip}     \fi
\ifx \showDOI      \undefined \def \showDOI       #1{#1}\fi
\ifx \showISBNx    \undefined \def \showISBNx     #1{\unskip}     \fi
\ifx \showISBNxiii \undefined \def \showISBNxiii  #1{\unskip}     \fi
\ifx \showISSN     \undefined \def \showISSN      #1{\unskip}     \fi
\ifx \showLCCN     \undefined \def \showLCCN      #1{\unskip}     \fi
\ifx \shownote     \undefined \def \shownote      #1{#1}          \fi
\ifx \showarticletitle \undefined \def \showarticletitle #1{#1}   \fi
\ifx \showURL      \undefined \def \showURL       {\relax}        \fi
\providecommand\bibfield[2]{#2}
\providecommand\bibinfo[2]{#2}
\providecommand\natexlab[1]{#1}
\providecommand\showeprint[2][]{arXiv:#2}

\bibitem[Administration(2022)]%
        {nhtsa2022fatareport}
\bibfield{author}{\bibinfo{person}{National Highway Traffic~Safety
  Administration}.} \bibinfo{year}{2022}\natexlab{}.
\newblock \bibinfo{booktitle}{\emph{Early Estimate of Motor Vehicle Traffic
  Fatalities in 2021}}.
\newblock \bibinfo{type}{{T}echnical {R}eport}.
\newblock
\urldef\tempurl%
\url{https://crashstats.nhtsa.dot.gov/Api/Public/ViewPublication/813283}
\showURL{%
\tempurl}


\bibitem[Bianchi et~al\mbox{.}(2021)]%
        {bianchi2021armanet}
\bibfield{author}{\bibinfo{person}{Filippo~Maria Bianchi},
  \bibinfo{person}{Daniele Grattarola}, \bibinfo{person}{Lorenzo Livi}, {and}
  \bibinfo{person}{Cesare Alippi}.} \bibinfo{year}{2021}\natexlab{}.
\newblock \showarticletitle{Graph Neural Networks with Convolutional ARMA
  Filters}.
\newblock \bibinfo{journal}{\emph{IEEE Transactions on Pattern Analysis and
  Machine Intelligence}} (\bibinfo{year}{2021}), \bibinfo{pages}{1–1}.
\newblock
\showISSN{1939-3539}
\urldef\tempurl%
\url{https://doi.org/10.1109/tpami.2021.3054830}
\showDOI{\tempurl}


\bibitem[Bing(2022)]%
        {webbingmap}
\bibfield{author}{\bibinfo{person}{Microsoft Bing}.}
  \bibinfo{year}{2022}\natexlab{}.
\newblock \bibinfo{booktitle}{\emph{Bing Map Traffic API}}.
\newblock
\urldef\tempurl%
\url{https://www.bingmapsportal.com}
\showURL{%
Retrieved December 31, 2021 from \tempurl}


\bibitem[Boeing(2017)]%
        {boeing2017osmnx}
\bibfield{author}{\bibinfo{person}{Geoff Boeing}.}
  \bibinfo{year}{2017}\natexlab{}.
\newblock \showarticletitle{OSMnx: New methods for acquiring, constructing,
  analyzing, and visualizing complex street networks}.
\newblock \bibinfo{journal}{\emph{Computers, Environment and Urban Systems}}
  \bibinfo{volume}{65} (\bibinfo{year}{2017}), \bibinfo{pages}{126--139}.
\newblock


\bibitem[Caliendo et~al\mbox{.}(2007)]%
        {caliendo2007regressionaccident}
\bibfield{author}{\bibinfo{person}{Ciro Caliendo}, \bibinfo{person}{Maurizio
  Guida}, {and} \bibinfo{person}{Alessandra Parisi}.}
  \bibinfo{year}{2007}\natexlab{}.
\newblock \showarticletitle{A crash-prediction model for multilane roads}.
\newblock \bibinfo{journal}{\emph{Accident Analysis \& Prevention}}
  \bibinfo{volume}{39}, \bibinfo{number}{4} (\bibinfo{year}{2007}),
  \bibinfo{pages}{657--670}.
\newblock


\bibitem[Chen et~al\mbox{.}(2001)]%
        {chen2001datasetpems}
\bibfield{author}{\bibinfo{person}{Chao Chen}, \bibinfo{person}{Karl Petty},
  \bibinfo{person}{Alexander Skabardonis}, \bibinfo{person}{Pravin Varaiya},
  {and} \bibinfo{person}{Zhanfeng Jia}.} \bibinfo{year}{2001}\natexlab{}.
\newblock \showarticletitle{Freeway performance measurement system: mining loop
  detector data}.
\newblock \bibinfo{journal}{\emph{Transportation Research Record}}
  \bibinfo{volume}{1748}, \bibinfo{number}{1} (\bibinfo{year}{2001}),
  \bibinfo{pages}{96--102}.
\newblock


\bibitem[Chen et~al\mbox{.}(2016)]%
        {chen2016realtimeaccident}
\bibfield{author}{\bibinfo{person}{Quanjun Chen}, \bibinfo{person}{Xuan Song},
  \bibinfo{person}{Harutoshi Yamada}, {and} \bibinfo{person}{Ryosuke
  Shibasaki}.} \bibinfo{year}{2016}\natexlab{}.
\newblock \showarticletitle{Learning deep representation from big and
  heterogeneous data for traffic accident inference}. In
  \bibinfo{booktitle}{\emph{Thirtieth AAAI conference on artificial
  intelligence}}.
\newblock


\bibitem[Chen and Guestrin(2016)]%
        {chen2016xgboost}
\bibfield{author}{\bibinfo{person}{Tianqi Chen} {and} \bibinfo{person}{Carlos
  Guestrin}.} \bibinfo{year}{2016}\natexlab{}.
\newblock \showarticletitle{Xgboost: A scalable tree boosting system}. In
  \bibinfo{booktitle}{\emph{Proceedings of the 22nd acm sigkdd international
  conference on knowledge discovery and data mining}}.
  \bibinfo{pages}{785--794}.
\newblock


\bibitem[Chen et~al\mbox{.}(2020)]%
        {chen2020bicomponentgcn}
\bibfield{author}{\bibinfo{person}{Weiqi Chen}, \bibinfo{person}{Ling Chen},
  \bibinfo{person}{Yu Xie}, \bibinfo{person}{Wei Cao}, \bibinfo{person}{Yusong
  Gao}, {and} \bibinfo{person}{Xiaojie Feng}.} \bibinfo{year}{2020}\natexlab{}.
\newblock \showarticletitle{Multi-range attentive bicomponent graph
  convolutional network for traffic forecasting}. In
  \bibinfo{booktitle}{\emph{Proceedings of the AAAI Conference on Artificial
  Intelligence}}, Vol.~\bibinfo{volume}{34}. \bibinfo{pages}{3529--3536}.
\newblock


\bibitem[Cui et~al\mbox{.}(2019)]%
        {cui2019gcntraffic}
\bibfield{author}{\bibinfo{person}{Zhiyong Cui}, \bibinfo{person}{Kristian
  Henrickson}, \bibinfo{person}{Ruimin Ke}, {and} \bibinfo{person}{Yinhai
  Wang}.} \bibinfo{year}{2019}\natexlab{}.
\newblock \showarticletitle{Traffic graph convolutional recurrent neural
  network: A deep learning framework for network-scale traffic learning and
  forecasting}.
\newblock \bibinfo{journal}{\emph{IEEE Transactions on Intelligent
  Transportation Systems}} \bibinfo{volume}{21}, \bibinfo{number}{11}
  (\bibinfo{year}{2019}), \bibinfo{pages}{4883--4894}.
\newblock


\bibitem[Defferrard et~al\mbox{.}(2016)]%
        {defferrard2016chebnet}
\bibfield{author}{\bibinfo{person}{Micha{\"e}l Defferrard},
  \bibinfo{person}{Xavier Bresson}, {and} \bibinfo{person}{Pierre
  Vandergheynst}.} \bibinfo{year}{2016}\natexlab{}.
\newblock \showarticletitle{Convolutional neural networks on graphs with fast
  localized spectral filtering}.
\newblock \bibinfo{journal}{\emph{Advances in neural information processing
  systems}}  \bibinfo{volume}{29} (\bibinfo{year}{2016}),
  \bibinfo{pages}{3844--3852}.
\newblock


\bibitem[Department(2022)]%
        {web22nys}
\bibfield{author}{\bibinfo{person}{New York City~Police Department}.}
  \bibinfo{year}{2022}\natexlab{}.
\newblock \bibinfo{booktitle}{\emph{New York City Motor Vehicle Collisions}}.
\newblock
\urldef\tempurl%
\url{https://data.cityofnewyork.us/Public-Safety/Motor-Vehicle-Collisions-Crashes/h9gi-nx95}
\showURL{%
\tempurl}


\bibitem[Du et~al\mbox{.}(2018)]%
        {du2018tagcn}
\bibfield{author}{\bibinfo{person}{Jian Du}, \bibinfo{person}{Shanghang Zhang},
  \bibinfo{person}{Guanhang Wu}, \bibinfo{person}{Jose M.~F. Moura}, {and}
  \bibinfo{person}{Soummya Kar}.} \bibinfo{year}{2018}\natexlab{}.
\newblock \bibinfo{title}{Topology Adaptive Graph Convolutional Networks}.
\newblock
\newblock
\showeprint[arxiv]{1710.10370}~[cs.LG]


\bibitem[Fey and Lenssen(2019)]%
        {fey2019pytorchgeo}
\bibfield{author}{\bibinfo{person}{Matthias Fey} {and}
  \bibinfo{person}{Jan~Eric Lenssen}.} \bibinfo{year}{2019}\natexlab{}.
\newblock \showarticletitle{Fast graph representation learning with PyTorch
  Geometric}.
\newblock \bibinfo{journal}{\emph{arXiv preprint arXiv:1903.02428}}
  (\bibinfo{year}{2019}).
\newblock


\bibitem[for Highway Safety~(IIHS)(2023)]%
        {web2023lawbystate}
\bibfield{author}{\bibinfo{person}{Insurance~Institute for Highway
  Safety~(IIHS)}.} \bibinfo{year}{2023}\natexlab{}.
\newblock \bibinfo{booktitle}{\emph{Automated enforcement laws}}.
\newblock
\urldef\tempurl%
\url{https://www.iihs.org/topics/red-light-running/automated-enforcement-laws}
\showURL{%
\tempurl}


\bibitem[Gilmer et~al\mbox{.}(2017)]%
        {gilmer2017mpnn}
\bibfield{author}{\bibinfo{person}{Justin Gilmer}, \bibinfo{person}{Samuel~S
  Schoenholz}, \bibinfo{person}{Patrick~F Riley}, \bibinfo{person}{Oriol
  Vinyals}, {and} \bibinfo{person}{George~E Dahl}.}
  \bibinfo{year}{2017}\natexlab{}.
\newblock \showarticletitle{Neural message passing for quantum chemistry}. In
  \bibinfo{booktitle}{\emph{International conference on machine learning}}.
  PMLR, \bibinfo{pages}{1263--1272}.
\newblock


\bibitem[Hamilton et~al\mbox{.}(2017a)]%
        {hamilton2017inductive}
\bibfield{author}{\bibinfo{person}{William~L Hamilton}, \bibinfo{person}{Rex
  Ying}, {and} \bibinfo{person}{Jure Leskovec}.}
  \bibinfo{year}{2017}\natexlab{a}.
\newblock \showarticletitle{Inductive representation learning on large graphs}.
  In \bibinfo{booktitle}{\emph{Proceedings of the 31st International Conference
  on Neural Information Processing Systems}}. \bibinfo{pages}{1025--1035}.
\newblock


\bibitem[Hamilton et~al\mbox{.}(2017b)]%
        {hamilton2017representation}
\bibfield{author}{\bibinfo{person}{William~L Hamilton}, \bibinfo{person}{Rex
  Ying}, {and} \bibinfo{person}{Jure Leskovec}.}
  \bibinfo{year}{2017}\natexlab{b}.
\newblock \showarticletitle{Representation learning on graphs: Methods and
  applications}.
\newblock \bibinfo{journal}{\emph{arXiv preprint arXiv:1709.05584}}
  (\bibinfo{year}{2017}).
\newblock


\bibitem[Ismail~Fawaz et~al\mbox{.}(2019)]%
        {IsmailFawaz2018cddiagram}
\bibfield{author}{\bibinfo{person}{Hassan Ismail~Fawaz},
  \bibinfo{person}{Germain Forestier}, \bibinfo{person}{Jonathan Weber},
  \bibinfo{person}{Lhassane Idoumghar}, {and} \bibinfo{person}{Pierre-Alain
  Muller}.} \bibinfo{year}{2019}\natexlab{}.
\newblock \showarticletitle{Deep learning for time series classification: a
  review}.
\newblock \bibinfo{journal}{\emph{Data Mining and Knowledge Discovery}}
  \bibinfo{volume}{33}, \bibinfo{number}{4} (\bibinfo{year}{2019}),
  \bibinfo{pages}{917--963}.
\newblock


\bibitem[Jagadish et~al\mbox{.}(2014)]%
        {jagadish2014datasetmetrla}
\bibfield{author}{\bibinfo{person}{Hosagrahar~V Jagadish},
  \bibinfo{person}{Johannes Gehrke}, \bibinfo{person}{Alexandros Labrinidis},
  \bibinfo{person}{Yannis Papakonstantinou}, \bibinfo{person}{Jignesh~M Patel},
  \bibinfo{person}{Raghu Ramakrishnan}, {and} \bibinfo{person}{Cyrus Shahabi}.}
  \bibinfo{year}{2014}\natexlab{}.
\newblock \showarticletitle{Big data and its technical challenges}.
\newblock \bibinfo{journal}{\emph{Commun. ACM}} \bibinfo{volume}{57},
  \bibinfo{number}{7} (\bibinfo{year}{2014}), \bibinfo{pages}{86--94}.
\newblock


\bibitem[Kipf and Welling(2016)]%
        {kipf2016gcnconv}
\bibfield{author}{\bibinfo{person}{Thomas~N Kipf} {and} \bibinfo{person}{Max
  Welling}.} \bibinfo{year}{2016}\natexlab{}.
\newblock \showarticletitle{Semi-supervised classification with graph
  convolutional networks}.
\newblock \bibinfo{journal}{\emph{arXiv preprint arXiv:1609.02907}}
  (\bibinfo{year}{2016}).
\newblock


\bibitem[Li et~al\mbox{.}(2020)]%
        {li2020gen}
\bibfield{author}{\bibinfo{person}{Guohao Li}, \bibinfo{person}{Chenxin Xiong},
  \bibinfo{person}{Ali Thabet}, {and} \bibinfo{person}{Bernard Ghanem}.}
  \bibinfo{year}{2020}\natexlab{}.
\newblock \bibinfo{title}{DeeperGCN: All You Need to Train Deeper GCNs}.
\newblock
\newblock
\showeprint[arxiv]{2006.07739}~[cs.LG]


\bibitem[Li et~al\mbox{.}(2017a)]%
        {li2017speedrandomwalkgru}
\bibfield{author}{\bibinfo{person}{Yaguang Li}, \bibinfo{person}{Rose Yu},
  \bibinfo{person}{Cyrus Shahabi}, {and} \bibinfo{person}{Yan Liu}.}
  \bibinfo{year}{2017}\natexlab{a}.
\newblock \showarticletitle{Diffusion convolutional recurrent neural network:
  Data-driven traffic forecasting}.
\newblock \bibinfo{journal}{\emph{arXiv preprint arXiv:1707.01926}}
  (\bibinfo{year}{2017}).
\newblock


\bibitem[Li et~al\mbox{.}(2017b)]%
        {li2017diffusion}
\bibfield{author}{\bibinfo{person}{Yaguang Li}, \bibinfo{person}{Rose Yu},
  \bibinfo{person}{Cyrus Shahabi}, {and} \bibinfo{person}{Yan Liu}.}
  \bibinfo{year}{2017}\natexlab{b}.
\newblock \showarticletitle{Diffusion convolutional recurrent neural network:
  Data-driven traffic forecasting}.
\newblock \bibinfo{journal}{\emph{arXiv preprint arXiv:1707.01926}}
  (\bibinfo{year}{2017}).
\newblock


\bibitem[Lippi et~al\mbox{.}(2013)]%
        {lippi2013kalman}
\bibfield{author}{\bibinfo{person}{Marco Lippi}, \bibinfo{person}{Matteo
  Bertini}, {and} \bibinfo{person}{Paolo Frasconi}.}
  \bibinfo{year}{2013}\natexlab{}.
\newblock \showarticletitle{Short-term traffic flow forecasting: An
  experimental comparison of time-series analysis and supervised learning}.
\newblock \bibinfo{journal}{\emph{IEEE Transactions on Intelligent
  Transportation Systems}} \bibinfo{volume}{14}, \bibinfo{number}{2}
  (\bibinfo{year}{2013}), \bibinfo{pages}{871--882}.
\newblock


\bibitem[Liu et~al\mbox{.}(2011)]%
        {liu2011arima}
\bibfield{author}{\bibinfo{person}{Wei Liu}, \bibinfo{person}{Yu Zheng},
  \bibinfo{person}{Sanjay Chawla}, \bibinfo{person}{Jing Yuan}, {and}
  \bibinfo{person}{Xie Xing}.} \bibinfo{year}{2011}\natexlab{}.
\newblock \showarticletitle{Discovering spatio-temporal causal interactions in
  traffic data streams}. In \bibinfo{booktitle}{\emph{Proceedings of the 17th
  ACM SIGKDD international conference on Knowledge discovery and data mining}}.
  \bibinfo{pages}{1010--1018}.
\newblock


\bibitem[Lloyd(2016)]%
        {lloyd2016britainreport}
\bibfield{author}{\bibinfo{person}{Daryl Lloyd}.}
  \bibinfo{year}{2016}\natexlab{}.
\newblock \showarticletitle{Reported road casualties in Great Britain: Main
  results 2015}.
\newblock  (\bibinfo{year}{2016}).
\newblock


\bibitem[Lv et~al\mbox{.}(2009)]%
        {lv2009knn}
\bibfield{author}{\bibinfo{person}{Yisheng Lv}, \bibinfo{person}{Shuming Tang},
  {and} \bibinfo{person}{Hongxia Zhao}.} \bibinfo{year}{2009}\natexlab{}.
\newblock \showarticletitle{Real-time highway traffic accident prediction based
  on the k-nearest neighbor method}. In \bibinfo{booktitle}{\emph{2009
  international conference on measuring technology and mechatronics
  automation}}, Vol.~\bibinfo{volume}{3}. IEEE, \bibinfo{pages}{547--550}.
\newblock


\bibitem[Lv et~al\mbox{.}(2018)]%
        {lv2018gcntrafficspeed}
\bibfield{author}{\bibinfo{person}{Zhongjian Lv}, \bibinfo{person}{Jiajie Xu},
  \bibinfo{person}{Kai Zheng}, \bibinfo{person}{Hongzhi Yin},
  \bibinfo{person}{Pengpeng Zhao}, {and} \bibinfo{person}{Xiaofang Zhou}.}
  \bibinfo{year}{2018}\natexlab{}.
\newblock \showarticletitle{Lc-rnn: A deep learning model for traffic speed
  prediction.}. In \bibinfo{booktitle}{\emph{IJCAI}}.
  \bibinfo{pages}{3470--3476}.
\newblock


\bibitem[Moosavi et~al\mbox{.}(2019)]%
        {moosavi2019usaccident}
\bibfield{author}{\bibinfo{person}{Sobhan Moosavi},
  \bibinfo{person}{Mohammad~Hossein Samavatian}, \bibinfo{person}{Srinivasan
  Parthasarathy}, \bibinfo{person}{Radu Teodorescu}, {and}
  \bibinfo{person}{Rajiv Ramnath}.} \bibinfo{year}{2019}\natexlab{}.
\newblock \showarticletitle{Accident risk prediction based on heterogeneous
  sparse data: New dataset and insights}. In
  \bibinfo{booktitle}{\emph{Proceedings of the 27th ACM SIGSPATIAL
  International Conference on Advances in Geographic Information Systems}}.
  \bibinfo{pages}{33--42}.
\newblock


\bibitem[Najjar et~al\mbox{.}(2017)]%
        {najjar2017imgaccident}
\bibfield{author}{\bibinfo{person}{Alameen Najjar},
  \bibinfo{person}{Shun’ichi Kaneko}, {and} \bibinfo{person}{Yoshikazu
  Miyanaga}.} \bibinfo{year}{2017}\natexlab{}.
\newblock \showarticletitle{Combining satellite imagery and open data to map
  road safety}. In \bibinfo{booktitle}{\emph{Thirty-First AAAI Conference on
  Artificial Intelligence}}.
\newblock


\bibitem[Nominatim(2023)]%
        {web23nominatim}
\bibfield{author}{\bibinfo{person}{Nominatim}.}
  \bibinfo{year}{2023}\natexlab{}.
\newblock \bibinfo{booktitle}{\emph{Nominatim}}.
\newblock
\urldef\tempurl%
\url{https://nominatim.org}
\showURL{%
\tempurl}


\bibitem[of~Chicago(2023)]%
        {web23chicago}
\bibfield{author}{\bibinfo{person}{City of Chicago}.}
  \bibinfo{year}{2023}\natexlab{}.
\newblock \bibinfo{booktitle}{\emph{Chicago Traffic Crashes}}.
\newblock
\urldef\tempurl%
\url{https://catalog.data.gov/dataset/traffic-crashes-crashes}
\showURL{%
\tempurl}


\bibitem[of~Motor~Vehicles(2023)]%
        {web23nyc}
\bibfield{author}{\bibinfo{person}{New York State~Department of
  Motor~Vehicles}.} \bibinfo{year}{2023}\natexlab{}.
\newblock \bibinfo{booktitle}{\emph{New York State Motor Vehicle Crashes}}.
\newblock
\urldef\tempurl%
\url{https://data.ny.gov/Transportation/Motor-Vehicle-Crashes-Vehicle-Information-Three-Ye/xe9x-a24f}
\showURL{%
\tempurl}


\bibitem[of~Transport(2016)]%
        {web16ukaccident}
\bibfield{author}{\bibinfo{person}{UK~Department of Transport}.}
  \bibinfo{year}{2016}\natexlab{}.
\newblock \bibinfo{booktitle}{\emph{UK Accidents}}.
\newblock
\urldef\tempurl%
\url{https://www.kaggle.com/datasets/daveianhickey/2000-16-traffic-flow-england-scotland-wales}
\showURL{%
\tempurl}


\bibitem[Oh et~al\mbox{.}(2006)]%
        {oh2006regressionaccident}
\bibfield{author}{\bibinfo{person}{Jutaek Oh}, \bibinfo{person}{Simon~P
  Washington}, {and} \bibinfo{person}{Doohee Nam}.}
  \bibinfo{year}{2006}\natexlab{}.
\newblock \showarticletitle{Accident prediction model for railway-highway
  interfaces}.
\newblock \bibinfo{journal}{\emph{Accident Analysis \& Prevention}}
  \bibinfo{volume}{38}, \bibinfo{number}{2} (\bibinfo{year}{2006}),
  \bibinfo{pages}{346--356}.
\newblock


\bibitem[OpenStreetMap(2023)]%
        {web23osm}
\bibfield{author}{\bibinfo{person}{OpenStreetMap}.}
  \bibinfo{year}{2023}\natexlab{}.
\newblock \bibinfo{booktitle}{\emph{OpenStreetMap}}.
\newblock
\urldef\tempurl%
\url{https://www.openstreetmap.org}
\showURL{%
\tempurl}


\bibitem[Othman et~al\mbox{.}(2009)]%
        {othman2009identifying}
\bibfield{author}{\bibinfo{person}{Sarbaz Othman}, \bibinfo{person}{Robert
  Thomson}, {and} \bibinfo{person}{Gunnar Lann{\'e}r}.}
  \bibinfo{year}{2009}\natexlab{}.
\newblock \showarticletitle{Identifying critical road geometry parameters
  affecting crash rate and crash type}. In \bibinfo{booktitle}{\emph{Annals of
  Advances in Automotive Medicine/Annual Scientific Conference}},
  Vol.~\bibinfo{volume}{53}. Association for the Advancement of Automotive
  Medicine, \bibinfo{pages}{155}.
\newblock


\bibitem[Park et~al\mbox{.}(2016)]%
        {park2016kmeans}
\bibfield{author}{\bibinfo{person}{Seong-hun Park}, \bibinfo{person}{Sung-min
  Kim}, {and} \bibinfo{person}{Young-guk Ha}.} \bibinfo{year}{2016}\natexlab{}.
\newblock \showarticletitle{Highway traffic accident prediction using VDS big
  data analysis}.
\newblock \bibinfo{journal}{\emph{The Journal of Supercomputing}}
  \bibinfo{volume}{72}, \bibinfo{number}{7} (\bibinfo{year}{2016}),
  \bibinfo{pages}{2815--2831}.
\newblock


\bibitem[Paszke et~al\mbox{.}(2019)]%
        {paszke2019pytorch}
\bibfield{author}{\bibinfo{person}{Adam Paszke}, \bibinfo{person}{Sam Gross},
  \bibinfo{person}{Francisco Massa}, \bibinfo{person}{Adam Lerer},
  \bibinfo{person}{James Bradbury}, \bibinfo{person}{Gregory Chanan},
  \bibinfo{person}{Trevor Killeen}, \bibinfo{person}{Zeming Lin},
  \bibinfo{person}{Natalia Gimelshein}, \bibinfo{person}{Luca Antiga},
  {et~al\mbox{.}}} \bibinfo{year}{2019}\natexlab{}.
\newblock \showarticletitle{Pytorch: An imperative style, high-performance deep
  learning library}.
\newblock \bibinfo{journal}{\emph{Advances in neural information processing
  systems}}  \bibinfo{volume}{32} (\bibinfo{year}{2019}),
  \bibinfo{pages}{8026--8037}.
\newblock


\bibitem[Ren et~al\mbox{.}(2018)]%
        {ren2018lstm}
\bibfield{author}{\bibinfo{person}{Honglei Ren}, \bibinfo{person}{You Song},
  \bibinfo{person}{Jingwen Wang}, \bibinfo{person}{Yucheng Hu}, {and}
  \bibinfo{person}{Jinzhi Lei}.} \bibinfo{year}{2018}\natexlab{}.
\newblock \showarticletitle{A deep learning approach to the citywide traffic
  accident risk prediction}. In \bibinfo{booktitle}{\emph{2018 21st
  International Conference on Intelligent Transportation Systems (ITSC)}}.
  IEEE, \bibinfo{pages}{3346--3351}.
\newblock


\bibitem[Rolison et~al\mbox{.}(2018)]%
        {rolison2018factors}
\bibfield{author}{\bibinfo{person}{Jonathan~J Rolison},
  \bibinfo{person}{Shirley Regev}, \bibinfo{person}{Salissou Moutari}, {and}
  \bibinfo{person}{Aidan Feeney}.} \bibinfo{year}{2018}\natexlab{}.
\newblock \showarticletitle{What are the factors that contribute to road
  accidents? An assessment of law enforcement views, ordinary drivers’
  opinions, and road accident records}.
\newblock \bibinfo{journal}{\emph{Accident Analysis \& Prevention}}
  \bibinfo{volume}{115} (\bibinfo{year}{2018}), \bibinfo{pages}{11--24}.
\newblock


\bibitem[Shi et~al\mbox{.}(2021)]%
        {shi2021graphtransformer}
\bibfield{author}{\bibinfo{person}{Yunsheng Shi}, \bibinfo{person}{Zhengjie
  Huang}, \bibinfo{person}{Shikun Feng}, \bibinfo{person}{Hui Zhong},
  \bibinfo{person}{Wenjin Wang}, {and} \bibinfo{person}{Yu Sun}.}
  \bibinfo{year}{2021}\natexlab{}.
\newblock \bibinfo{title}{Masked Label Prediction: Unified Message Passing
  Model for Semi-Supervised Classification}.
\newblock
\newblock
\showeprint[arxiv]{2009.03509}~[cs.LG]


\bibitem[Shuman et~al\mbox{.}(2013)]%
        {shuman2013edgeweight}
\bibfield{author}{\bibinfo{person}{David~I Shuman}, \bibinfo{person}{Sunil~K
  Narang}, \bibinfo{person}{Pascal Frossard}, \bibinfo{person}{Antonio Ortega},
  {and} \bibinfo{person}{Pierre Vandergheynst}.}
  \bibinfo{year}{2013}\natexlab{}.
\newblock \showarticletitle{The emerging field of signal processing on graphs:
  Extending high-dimensional data analysis to networks and other irregular
  domains}.
\newblock \bibinfo{journal}{\emph{IEEE signal processing magazine}}
  \bibinfo{volume}{30}, \bibinfo{number}{3} (\bibinfo{year}{2013}),
  \bibinfo{pages}{83--98}.
\newblock


\bibitem[Sobhan(2022)]%
        {web22usaccident}
\bibfield{author}{\bibinfo{person}{Moosavi Sobhan}.}
  \bibinfo{year}{2022}\natexlab{}.
\newblock \bibinfo{booktitle}{\emph{US Accidents}}.
\newblock
\urldef\tempurl%
\url{https://www.kaggle.com/datasets/sobhanmoosavi/us-accidents}
\showURL{%
\tempurl}


\bibitem[Tefft(2016)]%
        {tefft2016crashsleep}
\bibfield{author}{\bibinfo{person}{Brian~C Tefft}.}
  \bibinfo{year}{2016}\natexlab{}.
\newblock \showarticletitle{Acute sleep deprivation and risk of motor vehicle
  crash involvement}.
\newblock  (\bibinfo{year}{2016}).
\newblock


\bibitem[TRIP(2022)]%
        {trip2022trafficfatality}
\bibfield{author}{\bibinfo{person}{TRIP}.} \bibinfo{year}{2022}\natexlab{}.
\newblock \bibinfo{booktitle}{\emph{Addressing America's Traffic Safety Crisis:
  Examining The Causes of Increasing U.S. Traffic Fatalities and Identifying
  Solutions to Improve Road User Safety}}.
\newblock \bibinfo{type}{{T}echnical {R}eport}.
\newblock
\urldef\tempurl%
\url{https://tripnet.org/wp-content/uploads/2022/06/TRIP_Addressing_Americas_Traffic_Safety_Crisis_Report_June_2022.pdf}
\showURL{%
\tempurl}


\bibitem[US~Department~of Transportation(2023)]%
        {web23countermeasure}
\bibfield{author}{\bibinfo{person}{Federal Highway~Administration
  US~Department~of Transportation}.} \bibinfo{year}{2023}\natexlab{}.
\newblock \bibinfo{booktitle}{\emph{Safety Countermeasures initiative}}.
\newblock
\urldef\tempurl%
\url{https://highways.dot.gov/safety/proven-safety-countermeasures}
\showURL{%
\tempurl}


\bibitem[Veli{\v{c}}kovi{\'c} et~al\mbox{.}(2017)]%
        {velivckovic2017gat}
\bibfield{author}{\bibinfo{person}{Petar Veli{\v{c}}kovi{\'c}},
  \bibinfo{person}{Guillem Cucurull}, \bibinfo{person}{Arantxa Casanova},
  \bibinfo{person}{Adriana Romero}, \bibinfo{person}{Pietro Lio}, {and}
  \bibinfo{person}{Yoshua Bengio}.} \bibinfo{year}{2017}\natexlab{}.
\newblock \showarticletitle{Graph attention networks}.
\newblock \bibinfo{journal}{\emph{arXiv preprint arXiv:1710.10903}}
  (\bibinfo{year}{2017}).
\newblock


\bibitem[Weisfeiler and Leman(1968)]%
        {weisfeiler1968reduction}
\bibfield{author}{\bibinfo{person}{Boris Weisfeiler} {and}
  \bibinfo{person}{Andrei Leman}.} \bibinfo{year}{1968}\natexlab{}.
\newblock \showarticletitle{The reduction of a graph to canonical form and the
  algebra which appears therein}.
\newblock \bibinfo{journal}{\emph{NTI, Series}} \bibinfo{volume}{2},
  \bibinfo{number}{9} (\bibinfo{year}{1968}), \bibinfo{pages}{12--16}.
\newblock


\bibitem[WHO et~al\mbox{.}(2018)]%
        {world2018global}
\bibfield{author}{\bibinfo{person}{WHO} {et~al\mbox{.}}}
  \bibinfo{year}{2018}\natexlab{}.
\newblock \bibinfo{booktitle}{\emph{Global status report on road safety 2018:
  summary}}.
\newblock \bibinfo{type}{{T}echnical {R}eport}. \bibinfo{institution}{World
  Health Organization}.
\newblock


\bibitem[Xie and Grossman(2018)]%
        {xie2018cgc}
\bibfield{author}{\bibinfo{person}{Tian Xie} {and} \bibinfo{person}{Jeffrey~C.
  Grossman}.} \bibinfo{year}{2018}\natexlab{}.
\newblock \showarticletitle{Crystal Graph Convolutional Neural Networks for an
  Accurate and Interpretable Prediction of Material Properties}.
\newblock \bibinfo{journal}{\emph{Phys. Rev. Lett.}}  \bibinfo{volume}{120}
  (\bibinfo{date}{Apr} \bibinfo{year}{2018}), \bibinfo{pages}{145301}.
\newblock
Issue 14.
\urldef\tempurl%
\url{https://doi.org/10.1103/PhysRevLett.120.145301}
\showDOI{\tempurl}


\bibitem[Xu et~al\mbox{.}(2018)]%
        {xu2018gnn}
\bibfield{author}{\bibinfo{person}{Keyulu Xu}, \bibinfo{person}{Weihua Hu},
  \bibinfo{person}{Jure Leskovec}, {and} \bibinfo{person}{Stefanie Jegelka}.}
  \bibinfo{year}{2018}\natexlab{}.
\newblock \showarticletitle{How powerful are graph neural networks?}
\newblock \bibinfo{journal}{\emph{arXiv preprint arXiv:1810.00826}}
  (\bibinfo{year}{2018}).
\newblock


\bibitem[Yao et~al\mbox{.}(2019)]%
        {yao2019stdn}
\bibfield{author}{\bibinfo{person}{Huaxiu Yao}, \bibinfo{person}{Xianfeng
  Tang}, \bibinfo{person}{Hua Wei}, \bibinfo{person}{Guanjie Zheng}, {and}
  \bibinfo{person}{Zhenhui Li}.} \bibinfo{year}{2019}\natexlab{}.
\newblock \showarticletitle{Revisiting spatial-temporal similarity: A deep
  learning framework for traffic prediction}. In
  \bibinfo{booktitle}{\emph{Proceedings of the AAAI conference on artificial
  intelligence}}, Vol.~\bibinfo{volume}{33}. \bibinfo{pages}{5668--5675}.
\newblock


\bibitem[Yu et~al\mbox{.}(2017b)]%
        {yu2017gcntraffic}
\bibfield{author}{\bibinfo{person}{Bing Yu}, \bibinfo{person}{Haoteng Yin},
  {and} \bibinfo{person}{Zhanxing Zhu}.} \bibinfo{year}{2017}\natexlab{b}.
\newblock \showarticletitle{Spatio-temporal graph convolutional networks: A
  deep learning framework for traffic forecasting}.
\newblock \bibinfo{journal}{\emph{arXiv preprint arXiv:1709.04875}}
  (\bibinfo{year}{2017}).
\newblock


\bibitem[Yu et~al\mbox{.}(2021)]%
        {yu2021spatiotemporal}
\bibfield{author}{\bibinfo{person}{Le Yu}, \bibinfo{person}{Bowen Du},
  \bibinfo{person}{Xiao Hu}, \bibinfo{person}{Leilei Sun},
  \bibinfo{person}{Liangzhe Han}, {and} \bibinfo{person}{Weifeng Lv}.}
  \bibinfo{year}{2021}\natexlab{}.
\newblock \showarticletitle{Deep spatio-temporal graph convolutional network
  for traffic accident prediction}.
\newblock \bibinfo{journal}{\emph{Neurocomputing}}  \bibinfo{volume}{423}
  (\bibinfo{year}{2021}), \bibinfo{pages}{135--147}.
\newblock


\bibitem[Yu et~al\mbox{.}(2017a)]%
        {yu2017lstm}
\bibfield{author}{\bibinfo{person}{Rose Yu}, \bibinfo{person}{Yaguang Li},
  \bibinfo{person}{Cyrus Shahabi}, \bibinfo{person}{Ugur Demiryurek}, {and}
  \bibinfo{person}{Yan Liu}.} \bibinfo{year}{2017}\natexlab{a}.
\newblock \showarticletitle{Deep learning: A generic approach for extreme
  condition traffic forecasting}. In \bibinfo{booktitle}{\emph{Proceedings of
  the 2017 SIAM international Conference on Data Mining}}. SIAM,
  \bibinfo{pages}{777--785}.
\newblock


\bibitem[Yuan et~al\mbox{.}(2018)]%
        {yuan2018convlstmaccident}
\bibfield{author}{\bibinfo{person}{Zhuoning Yuan}, \bibinfo{person}{Xun Zhou},
  {and} \bibinfo{person}{Tianbao Yang}.} \bibinfo{year}{2018}\natexlab{}.
\newblock \showarticletitle{Hetero-convlstm: A deep learning approach to
  traffic accident prediction on heterogeneous spatio-temporal data}. In
  \bibinfo{booktitle}{\emph{Proceedings of the 24th ACM SIGKDD International
  Conference on Knowledge Discovery \& Data Mining}}.
  \bibinfo{pages}{984--992}.
\newblock


\bibitem[Zhang et~al\mbox{.}(2017)]%
        {zhang2017resnettraffic}
\bibfield{author}{\bibinfo{person}{Junbo Zhang}, \bibinfo{person}{Yu Zheng},
  {and} \bibinfo{person}{Dekang Qi}.} \bibinfo{year}{2017}\natexlab{}.
\newblock \showarticletitle{Deep spatio-temporal residual networks for citywide
  crowd flows prediction}. In \bibinfo{booktitle}{\emph{Thirty-first AAAI
  conference on artificial intelligence}}.
\newblock


\bibitem[Zheng et~al\mbox{.}(2020)]%
        {zheng2020gman}
\bibfield{author}{\bibinfo{person}{Chuanpan Zheng}, \bibinfo{person}{Xiaoliang
  Fan}, \bibinfo{person}{Cheng Wang}, {and} \bibinfo{person}{Jianzhong Qi}.}
  \bibinfo{year}{2020}\natexlab{}.
\newblock \showarticletitle{Gman: A graph multi-attention network for traffic
  prediction}. In \bibinfo{booktitle}{\emph{Proceedings of the AAAI Conference
  on Artificial Intelligence}}, Vol.~\bibinfo{volume}{34}.
  \bibinfo{pages}{1234--1241}.
\newblock


\bibitem[Zhou et~al\mbox{.}(2020a)]%
        {zhou2020riskoracle}
\bibfield{author}{\bibinfo{person}{Zhengyang Zhou}, \bibinfo{person}{Yang
  Wang}, \bibinfo{person}{Xike Xie}, \bibinfo{person}{Lianliang Chen}, {and}
  \bibinfo{person}{Hengchang Liu}.} \bibinfo{year}{2020}\natexlab{a}.
\newblock \showarticletitle{RiskOracle: A Minute-Level Citywide Traffic
  Accident Forecasting Framework}. In \bibinfo{booktitle}{\emph{Proceedings of
  the AAAI Conference on Artificial Intelligence}}, Vol.~\bibinfo{volume}{34}.
  \bibinfo{pages}{1258--1265}.
\newblock


\bibitem[Zhou et~al\mbox{.}(2020b)]%
        {zhou2020spatiotemporalaccident}
\bibfield{author}{\bibinfo{person}{Zhengyang Zhou}, \bibinfo{person}{Yang
  Wang}, \bibinfo{person}{Xike Xie}, \bibinfo{person}{Lianliang Chen}, {and}
  \bibinfo{person}{Chaochao Zhu}.} \bibinfo{year}{2020}\natexlab{b}.
\newblock \showarticletitle{Foresee urban sparse traffic accidents: A
  spatiotemporal multi-granularity perspective}.
\newblock \bibinfo{journal}{\emph{IEEE Transactions on Knowledge and Data
  Engineering}} (\bibinfo{year}{2020}).
\newblock


\bibitem[Zhu and Freeman(2019)]%
        {zhu2019availability}
\bibfield{author}{\bibinfo{person}{Xiaohua Zhu} {and}
  \bibinfo{person}{Mark~Antony Freeman}.} \bibinfo{year}{2019}\natexlab{}.
\newblock \showarticletitle{An evaluation of US municipal open data portals: A
  user interaction framework}.
\newblock \bibinfo{journal}{\emph{Journal of the Association for Information
  Science and Technology}} \bibinfo{volume}{70}, \bibinfo{number}{1}
  (\bibinfo{year}{2019}), \bibinfo{pages}{27--37}.
\newblock


\end{thebibliography}


\end{document}